\newtheorem{theorem}{Theorem}
\newtheorem{lem}{Lemma}
\newtheorem{lemma}{Lemma}
\newtheorem{remark}{Remark}
\newcommand{\B}{\boldsymbol}
\newcommand{\M}{\mathbf}
\newcommand{\MT}{\textbf}
\newcommand{\sgn}{\operatorname{sgn}}
\newcommand{\s}{\mathbf S}
\newcommand{\reminder}[1]{{\bf ** #1 **}}
\newcommand{\commentout}[1]{}
\def\sglasso{{\sc glasso} }
\def\sSMACS{ {\sc smacs} }
\def\GL{\sglasso}
\DeclareMathOperator*{\argmin}{arg\,min}
\DeclareMathOperator*{\mini}{minimize}
\newcommand{\pine}{{\textsc \small{PINE-GL}} }
\newcommand{\pex}{{\textsc \small{PEX-GL}} }
\newcommand{\pgr}{{\textsc \small{PGR-GL}} }
\newcommand{\OMOD}{{\textsc \small{PINE-GL} }}
\begin{document}

\title{A Flexible, Scalable and Efficient Algorithmic Framework for \emph{Primal} Graphical Lasso}


\author{Rahul Mazumder\\
Department of Statistics, Stanford University, Stanford, CA\\
email: { \tt rahulm@stanford.edu} 
\and
Deepak K. Agarwal \\
Yahoo! Research, 4401 Great America Parkway, Santa Clara.\\
email: {\tt dagarwal@yahoo-inc.com}}
\date{Submitted for publication on 10-11-2011}
\maketitle

\begin{abstract}
We propose a scalable, efficient and statistically motivated computational framework for Graphical Lasso \citep{FHT2007a} --- 
a covariance regularization framework that has received significant attention in the statistics community over the past few years. 
Existing algorithms have trouble in scaling to dimensions larger than a thousand. 
Our proposal significantly enhances the state-of-the-art for such moderate sized problems  
and gracefully scales to larger problems where other algorithms become practically infeasible. 
This requires a few key new ideas. We operate on the primal problem and use a subtle variation of block-coordinate-methods which drastically reduces the computational 
complexity by orders of magnitude. We provide rigorous theoretical guarantees on the convergence and complexity of our algorithm and 
demonstrate the effectiveness of our proposal via experiments. 
We believe that our framework extends the applicability of Graphical Lasso to large-scale modern applications like bioinformatics, collaborative filtering and social networks, among others. 
\end{abstract}

\noindent \textbf{keywords} \\
Graphical Lasso, $\ell_1$ regularization / LASSO, sparse inverse covariance selection, large scale convex optimization, convergence analysis, covariance estimation, positive definite matrices

\section{Introduction} 
\label{sec:intro} 
\paragraph{Problem Description}
Let $\s_{p\times p}$ denote a $p$-dimensional sample covariance matrix obtained through i.i.d samples from  
a multivariate Gaussian distribution with (unknown) covariance $\B{\Sigma}$  
and precision matrix
$\B{\Sigma}^{-1}$.
 The negative log-likelihood is given by:   
\begin{equation}  \label{eqn-likhd}  
f(\B{\Theta}) := -  \log\det \B{\Theta} + \langle \s, \B{\Theta}\rangle \;\;\text{on $\B{\Theta} \succ \mathbf{0}$}, 
\end{equation}  
where  $\langle \s, \B{\Theta}\rangle:=\mathrm{tr}(\s\B{\Theta})$ and $\B\Theta$ corresponds to the precision matrix. 
The MLE (when it exists)  
is $\widehat{\B{\Theta}}=\s^{-1}$, but this estimator has high variance unless the sample size $n$ is large relative  
to the dimension $p$. This makes the 
MLE a not-so-useful estimator of the covariance/precision matrix. In such high dimensional problems regularization (smoothing) is imperative to
obtain reliable estimates.  In fact, for the Gaussian distribution 
the precision matrix \citep{cox-W-96,Laur1996} captures conditional dependencies among variables where absence of an edge (zero entry in the precision matrix) 
implies conditional independence. Hence, taking recourse to smoothing methods that induce sparsity is attractive.
In addition to producing shrinkage estimators, a sparse precision graph leads to interpretable models and also provides  
model compression. In the context of learning large-scale graphs it is often undesirable from the point of view of 
computational/storage considerations to learn a model with all possible $p^2$ edges present. 
Surprisingly enough, for large scale problems i.e. with $p \approx 10^4 - 10^6$, sparse precision graphs are 
computationally tractable, whereas their dense counterparts are not \citep[see for details,]{MH-GL-11}. 

The $\ell_{1}$ regularization \citep{FHT2007a,BGA2008,yuan_lin_07,MB2006} is often used in this context
since it performs smoothing, induces sparsity, adds interpretation and forms an effective 
model selection procedure. This is popularly known as \emph{sparse inverse covariance selection}  or the Graphical Lasso  
and is obtained as a solution to the following regularized criterion:
\begin{equation}\label{crit1}  
\mini_{\B{\Theta} \succ \mathbf{0}} \;\;g(\B{\Theta}) := f(\B{\Theta}) + \lambda \sum_{ij}|\theta_{ij}| 
\end{equation} 
where $\lambda > 0$ is the amount of regularization imposed on the entries of the precision matrix $\B\Theta=((\theta_{ij}))$.
Equation~(\ref{crit1}) is a convex optimization problem (Semi-Definite Program aka SDP) in the variable $\B\Theta$.
The class of models described through equation~(\ref{crit1}) has already gained
widespread interest in many statistical applications like
biostatistics, functional magnetic resonance imaging, network
analysis, collaborative filtering \citep{FHT2007a,neuro-alzh-10,AZM-11}, and many more. 
Considerable progress has also been made in studying the statistical properties of
the estimator and its variants \citep{ravi-11,fan-09}.  
We also note that the optimization in (\ref{crit1}) is often used in a more non-parametric fashion \citep{AZM-11,MH-GL-11} for
any positive semidefinite input matrix $\s$, not necessarily a sample
covariance matrix from a MVN sample. 

\paragraph{Context and Background}
Interior point
methods for solving (\ref{crit1})
scale poorly with increasing dimensions and quickly become infeasible
for problem sizes around a hundred. For scalability purposes,
first order methods relying on gradient information instead of Hessian (i.e. second order methods)
become almost
imperative \citep{nest_03}. Over the past few years there has been substantial 
interest in developing such specialized scalable solvers for
(\ref{crit1}) 
\citep{FHT2007a,BGA2008,Lu:09,Lu:10,Scheinberg_Ma_Goldfarb_2010,Yuan_2009,boyd-admm}. 
However, existing solvers have difficulty in
scaling to problems with $p > 1000$ --- precluding the
wide-spread use of these methods in modern day applications like
collaborative filtering, graph mining, web-applications, large microarray data and 
other high dimensional problems.

There have been other interesting formulations to sparse precision
matrix estimation
\citep[for example,]{Rothman01092010,sp_l1_cai-11,MB2006,FHT-GL-10}. The formulation of \citet{Rothman01092010} is non-convex.
\citet{sp_l1_cai-11} consider a linear programming approach where the
precision matrix estimate \emph{need not} be positive definite.
Pseudo-likelihood based approaches \citep{FHT-GL-10} do not ensure
positive definiteness of the matrices.  In this paper we focus on
(\ref{crit1}).

\paragraph{Motivation}
Several large scale covariance selection problems require algorithms that produce reasonably 
accurate approximations to the optimal solution of (\ref{crit1}) within a certain time limit or equivalently, a limit on the computational budget \citep{boto-08}. 

In fact, for large scale problems, under computational constraints an approximate solution to (\ref{crit1}) is often the only feasible option. But for several applications,
other than speed and scalability, it is necessary for the approximate
solution to retain crucial properties of the optimal solution like sparsity and positive definiteness.
One such application of large scale covariance selection was recently explored in the work of \citet{AZM-11}. The authors  
used a sparse inverse covariance regularization to estimate the covariance matrix of high dimensional random
effects in a multi-level hierarchical model. The paper explored prediction problems in 
recommender systems where the goal was to predict responses on unobserved user-item cells in a large matrix using
responses on observed user-item pairs. Each user $u$ is assigned an $M$ dimensional random vector $\bm{\phi}_u$ that represents
the user's latent affinity to $M$ items. $\bm{\phi}_{u}$'s are assumed to be drawn from a multivariate Guassian prior with unknown covariance. The covariance
is estimated via a E-M framework using an $\ell_1$ regularization on the elements of precision matrix.

The use of a sparse inverse covariance regularization in the paper 
\citep{AZM-11} led to a model with better predictive accuracy compared
to other state-of-the-art methods. Since the estimation is based on
an E-M strategy, it requires strictly positive
definite estimates of the covariance and precision matrix. Indeed,
an optimization of the form (\ref{crit1}) 
needs to be conducted in the M-step --- hence it is enough to terminate the process early
without complete optimization. Early stopping along with sparsity in the precision 
matrix leads to a drastic reduction in computation time. 
The key property of
covariance estimation required for the method to work is the ability
to return both the precision matrix and its exact inverse i.e. the covariance matrix.
These properties, apparently  are not possessed by existing algorithms for (\ref{crit1}). 
This may have precluded the use of sparse inverse covariance for estimating covariances in high dimensional random-effects
model. 
We note that the strategy used in \citet{AZM-11} using the model fitting
method described in this paper is general and
can be used to model covariance in other high-dimensional multivariate
random effects model that arise in applications like spatial
statistics~\citep{besag}, social networks~\citep{hoff,hoffrafteryhandcock}, and many more.  

In the scenarios described above, we want a `flexible' fitting algorithm for
(\ref{crit1}) such that:
\vskip -3pt
\begin{itemize}
\item It can deliver a solution of arbitrary accuracy to (\ref{crit1}) --- the accuracy depending upon demands of the user/ application.    
\item Even if a low accuracy solution is desired, the algorithm upon exiting should return a 
sparse and positive definite $\B\Theta$ and its inverse $\B\Theta^{-1}$ --- fundamental ingredients for relevant 
statistical model fitting procedures. 
\item The computational complexity per iteration of the algorithm is cheap enough to solve large scale problems.
\item It readily adapts to warm-starts for computing a path of solutions on a grid of $\lambda$ values.
\end{itemize}
We believe estimation procedures for inverse covariance described in this paper  
will make it routine to apply large scale
covariance selection to high-dimensional multivariate data. 

\paragraph{Our Approach}
We provide a brief outline of our approach and the salient features that make it different from other existing algorithms.
Many of the sophisticated state-of-the-art algorithms 
\citep{BGA2008,Lu:09,Lu:10,Scheinberg_Ma_Goldfarb_2010,boyd-admm} designed to solve (\ref{crit1}) 
perform expensive operations like matrix inversions / eigen-decompositions 
on the \emph{entire} matrix at every iteration requiring $O(p^3)$, which is clearly prohibitive for large problems. 
We take a different route by pursuing row/column block-coordinate based methods that cyclically update the estimates of
one row/column at a time fixing the others at their latest values.  
Although \citet{FHT2007a,BGA2008} also pursue block-coordinate methods, our approach differs in a few very important ways.

First, while we operate on the primal (where the primal variable is the precision matrix $\B\Theta$), 
\citet{FHT2007a,BGA2008} operate on the dual of (\ref{crit1}). 
The primal and dual problems have some subtle and important 
differences that need consideration for large scale statistical applications. 
Algorithms operating on the dual \citep[for example]{FHT2007a,Lu:10,BGA2008} do not return a sparse and positive definite precision matrix unless optimization is done to a very high 
degree of accuracy --- this may be prohibitive for large scale problems.
A more detailed discussion of this issue is provided in Section \ref{sec:related}.

Second, we track both the precision and the covariance matrix over iterations, 
and our row/column block-coordinate wise procedure 
\textit{does not perform a complete minimization} over the partial problems. 
This is a crucial observation since it reduces the row/column update cost from $O(p^3)$ to $O(p^2)$. 
Although the idea looks simple at first blush, such incomplete minimization over partial problems is not necessarily guaranteed to ensure a proper optimization
algorithm with convergence certificates. In Section \ref{sec:conv-analysis} we show that such a relaxation still guarantees convergence of our algorithm. 
To the best of our knowledge, such a convergence analysis is novel both in the statistics and optimization literature. 
\paragraph{Contributions}
We provide a summary of our main contributions before a detailed description of 
our approach. We propose a novel model fitting algorithm for a popular covariance selection method (\ref{crit1})  
that outperforms previous state of the art fitting algorithms for large problems with dimension $p \approx 1-3$
thousands. The Algorithm design requires new and novel ideas. Our proposal is particularly suited to compute a path of
solutions to (\ref{crit1}) by using warm-starts on a grid of $\lambda$
values. The performance gains are quite impressive when compared to
other existing algorithms for the same task as illustrated in Section~\ref{sec:expts}. In addition, our algorithm is
amenable to early stopping, provides a sparse and positive-definite solution, and scales
to very large scale problems that are impractical to fit using existing methods. 
We provide a novel proof of asymptotic (algorithmic) convergence analysis, analyze complexity of the method
and show the superiority of our methods through
large scale simulation and data analysis. Finally, we outline how our approach can accommodate other row/column separable convex 
regularizers.
\section{Algorithmic Framework} \label{our-method}
We now provide a detailed development of our fitting algorithm in this section, including convergence proof and computational
complexity analysis. We begin with notations.
\paragraph{Notations}
We denote the set of all $k \times k$ positive definite (respectively, positive semi-definite) matrices by $S^{++}_{k}$ 
(respectively $S^{+}_k$). We will write $A_{k \times k} \succ 0$ if $A \in S^{++}_{k}$, similarly $A \succeq 0$ implies
$A \in  S^{+}_{k}$.
For a matrix $A_{p \times p}$ we will denote its entries by $a_{ij}, i =1, \ldots, p; j=1, \ldots, p$.

For a vector $\M{u}$, the notation $\|\M{u}\|_2$ denotes the usual $\ell_2$ norm, $\|\M{u}\|_1$ denotes the $\ell_1$ norm.
For a matrix $\M{U}$, we will use 
$\|\M{U}\|_2$ to denote its spectral  norm i.e. the largest singular value of $\M{U}$.

\paragraph{Description of the Algorithm}
The block coordinate method operates by fixing a row/column index $i \in \{1,2,\ldots,p\}$, which without loss of generality, is assumed to be $p$. 
Partition the precision matrix $\B{\Theta}$ and the sample covariance matrix  $\s$ as follows:
\begin{eqnarray}\label{break-x} 
\B{\Theta} = \left( 
  \begin{array}{cc} 
    \B{\Theta}_{11} & \B{\theta}_{1p} \\ 
    \B{\theta}_{p1}  & \theta_{pp} \\ 
  \end{array} 
\right),    & \M{S} = \left(  
  \begin{array}{cc} 
    \mathbf{S}_{11} & \M{s}_{1p} \\ 
    \M{s}_{p1}  & s_{pp} \\ 
  \end{array} 
\right).    
\end{eqnarray} 
Using standard formulae for determinants of block-partitioned matrices we have:  
\begin{equation}\label{eq-block} 
\log\det(\B{\Theta})= \log\det(\B{\Theta}_{11}) +  \log(\theta_{pp} - \B{\theta}_{1p}'(\B{\Theta}_{11})^{-1}\B{\theta}_{1p}). 
\end{equation} 
Using the above, the part of $g(\B{\Theta})$ in equation~(\ref{crit1}) that depends upon the $p^{\mathrm{th}}$ row/column of $\B{\Theta}$ is given by: 
\begin{equation}\label{split}
g_p(\theta_{pp},\B{\theta}_{1p}) = - \log(\theta_{pp} - \B{\theta}_{1p}'(\B{\Theta}_{11})^{-1}\B{\theta}_{1p}) 
 +  2 \M{s}_{1p}'\B{\theta}_{1p} + (s_{pp} + \lambda)\theta_{pp} + 2\lambda\|\B{\theta}_{1p}\|_1 .
\end{equation}
Note that the positive-definiteness of $\B\Theta$ assures $\theta_{pp} \geq 0$.  
In (\ref{split}), the optimization variables are $\theta_{pp}$ and $\B{\theta}_{1p}$.  
Conventional forms of block coordinate descent \citep{Tseng01,FHT2007} when applied to this problem  
will require completely minimizing the function (\ref{split}) over the variables $\B{\theta}_{1p}$ and $\theta_{pp}$. 
Clearly for large problems an accurate optimization of this problem is quite computationally 
intensive, especially since this needs to be done several times across all rows/columns.  
We choose to deviate from this approach and propose to perform an \emph{inexact minimization} in the afore-mentioned stage.  
The fact that such a deviation still ensures a proper optimization procedure will be discussed later, for now we continue with the description of
the algorithm.

Minimizing the criterion (\ref{split}) with respect to $\theta_{pp}$ with other coordinates fixed gives:
\begin{equation}  
\widehat\theta_{pp} := \argmin_{\theta_{pp}} \;\;  g(\theta_{pp}, \B{\theta}_{1p}) \;\;=\;\;\;  1/(s_{pp} + \lambda) + 
 \B{\theta}_{1p}'(\B{\Theta}_{11})^{-1}\B{\theta}_{1p}.  \label{maxi-1} 
\end{equation}
The partially minimized objective (\ref{split}), w.r.t. $\theta_{pp}$ is given by:
\begin{equation}
\min_{\theta_{pp}}\; g_p(\theta_{pp},\B{\theta}_{1p}) = \log\det(s_{pp}+\lambda) +  2 \M{s}_{1p}'\B{\theta}_{1p} + 1\nonumber \\
 +  (s_{pp}+ \lambda)\B{\theta}_{1p}'(\B{\Theta}_{11})^{-1}\B{\theta}_{1p} + 
2\lambda\|\B{\theta}_{1p}\|_1. \label{margin-diag-1}
\end{equation}
Ignoring the constants independent of $\B{\theta}_{1p}$ above, we obtain an $\ell_1$ regularized quadratic\footnote{note that the problem is convex only if  $\B{\Theta}_{11}^{-1}\succeq 0$, which is the case by virtue of the positive definiteness of the precision matrices, as shown in 
Section~\ref{sec:props}} function, which we denote by: 
\begin{equation}
g_p(\B{\theta}_{1p}) =  \B{\theta}_{1p}'\{(s_{pp}+ \lambda)\B{\Theta}_{11}^{-1}\}\B{\theta}_{1p} + 2 \M{s}_{1p}'\B{\theta}_{1p} + 
2\lambda\|\B{\theta}_{1p}\|_1. \label{margin-diag-2}
\end{equation}
We propose to use \emph{one} sweep of cyclical coordinate-descent on this function $g(\B{\theta}_{1p})$, w.r.t. the variable $\B\theta_{1p}$. 

We now summarize the update rule described above. 
Fix an arbitrary $\widetilde {\B{\Theta}} \succ 0$,
\begin{equation}
\widetilde{\B{\Theta}} = \left( 
  \begin{array}{cc} 
    \widetilde{\B{\Theta}}_{11} & \widetilde{\B{\theta}}_{1p} \\ 
    \widetilde{\B{\theta}}_{p1}  & \widetilde{\theta}_{pp} \\ 
  \end{array} 
\right)
\end{equation}
and consider an increment in  
$\B{\Theta}$ around $\widetilde {\B\Theta}$ in the direction of the $p^{\mathrm{th}}$ row/column. This updates $\widetilde{\B\Theta}$ to 
$\widehat{\B\Theta}$:
\begin{equation}\label{def-omega} 
 \widehat{\B{\Theta}} \longleftarrow  \widetilde{\B{\Theta}} + (\B{\omega} \M{e}'_p + \M{e}_p\B{\omega}') 
\end{equation} 
 where 
$\M{e}_p$ is a vector in $\Re^{p}$, with all entries equal to 0 but the $p^{\mathrm{th}}$ entry equals to 1, 
$\B{\omega}=(\omega_1,\ldots,\omega_p)$ denotes the ``increment" in the direction of the $p^{\mathrm{th}}$ row/column. 
\begin{algorithm}\caption{Inner Block Inexact Coordinate-Descent}\label{algo:inner-cd}
\begin{enumerate}
\item\label{item-11} Initial value of $\B\Theta$ is $\widetilde{\B{\Theta}}$. Assign $\widehat{\B{\Theta}} =\widetilde{\B{\Theta}}$.
\item\label{item-12}  Update the entries $\omega_{1}, \ldots, \omega_{p-1}$ and also $\widetilde{\B{\theta}}_{1p}$, 
as in (\ref{inner-cd-update1}) and (\ref{inner-cd-update2}).
\item\label{item-13}  Update $\widehat{\omega}_{p}$ using the update-rule (\ref{update-diag1}).
Consequently change the $(p,p)^{\mathrm{th}}$ entry of $\widehat{\B{\Theta}}$ to $\widehat{\omega}_{p} + \widetilde{\theta_{pp}}$.
\end{enumerate}
\end{algorithm}
Using notation (\ref{def-omega}) and $g_p(\cdot), \cdot \in \Re^{p-1}$ 
as in (\ref{margin-diag-2}), the update rule in $\B{\omega}$ is given by:
\begin{eqnarray}
\widehat{\omega}_{i} = \argmin_{\omega_{i}} & g_p(\ldots, (\widehat{\B{\theta}}_{1p})_{i-1},(\widetilde{\B{\theta}}_{1p})_{i} + \omega_{i},(\widetilde{\B{\theta}}_{1p})_{i+1},\ldots) & \label{inner-cd-update1} \\
(\widehat{\B{\theta}}_{1p})_i \leftarrow (\widetilde{\B{\theta}}_{1p})_i + \widehat{\omega}_{i}, &\;\;
(\widehat{\B{\theta}}_{p1})_i \leftarrow (\widetilde{\B{\theta}}_{p1})_i + \widehat{\omega}_{i}, \;\; i= 1, \ldots, p-1. & 
\label{inner-cd-update2}
\end{eqnarray}
Observe that the update (\ref{inner-cd-update1}) is simply a soft-thresholding operation:
\begin{flalign} 
\widehat{\omega}_{i} = &\sgn(a_i)(|a_i| - \lambda)_+/b_i  - (\widetilde{\B{\theta}}_{1p})_{i},\;\;\; \text{where,}&  \label{soft-thre-1}\\
   a_i = & (\M{s}_{1p})_i  + (s_{pp}+ \lambda) \left ( \sum_{j < i}
(\B{\Theta}_{11}^{-1})_{ij}(\widehat{\B{\theta}}_{1p})_{j}  + \sum_{j > i} (\B{\Theta}_{11}^{-1})_{ij}(\widetilde{\B{\theta}}_{1p})_{j} \right),& b_i = (s_{pp}+ \lambda) (\B{\Theta}_{11}^{-1})_{ii} \nonumber 
\end{flalign}
Finally, upon updating the off-diagonal entries in $\widetilde{\B{\Theta}}$, the diagonal entry 
is updated using (\ref{maxi-1}): 
\begin{equation}\label{update-diag1}
\widehat{\omega}_{p} \leftarrow 1/(s_{pp} + \lambda) +  \widetilde{\B{\theta}}_{1p}'(\widetilde{\B{\Theta}}_{11})^{-1}\widetilde{\B{\theta}}_{1p} - 
\widetilde{\theta}_{pp}
\end{equation}
Overall, the above steps lead to the update formula:    
$\widehat{\B{\Theta}} \longleftarrow  \widetilde{\B{\Theta}} + (\widehat{\B{\omega}} \M{e}'_p + \M{e}_p\widehat{\B{\omega}}')$.

Note that the above operations require evaluations of the residual $a_i$. 
This requires computing at the onset the
full gradient vector 
of the smooth part in 
(\ref{margin-diag-2}) at $\widetilde{\B\theta}_{1p}$. 
When a coordinate of the vector $\widetilde{\B\theta}_{1p}$ gets updated, the entire gradient vector changes --- this update can be achieved in $O(p)$ flops. 
Note that in case $\widehat{\omega}_{i}=0$, no update 
is required. Hence, if on \emph{average} the number of non-zeros in $\widehat{\B{\theta}}_{p1}$, $\widetilde{\B{\theta}}_{p1}$ is 
$k$, then the update (\ref{inner-cd-update1})--(\ref{inner-cd-update2}) requires an overall 
$O(pk)$ flops which, for $k \ll p$ leads to a significant reduction over the cost of a dense matrix/ vector multiplication i.e. $O(p^2)$.
Algorithm~\ref{algo:inner-cd} summarizes the updating steps described above.

The above description was for updating the $p^{\mathrm{th}}$ row/column of the matrix $\B\Theta$. This needs to be done for every row/column --- one full sweep across the $p$ rows/columns defines one iteration of our algorithm.
We now describe the full version of our algorithm in Algorithm~\ref{algo:block-gl}, we name it: 
\MT{P}rimal \MT{IN}exact Minimization for \MT{G}raphical \MT{L}asso (\pine).
\begin{algorithm}[htpb]
\caption{Primal Inexact Minimization for Graphical Lasso (\pine)}\label{algo:block-gl}
Inputs: $\s,\lambda$. Initialization: $(\widetilde{\B\Theta},\widetilde{\B{\Theta}}^{-1})$.
\begin{enumerate} 
\item[1] For every row/column  $i \in \{1,2,\cdots,p,1,2,\cdots\}$, perform steps 2-3 till convergence.  
\item[2] Permute the matrix such that the $i^{\mathrm{th}}$ row/column is the $p^{\mathrm{th}}$ i.e. of the form (\ref{break-x}). \\
Obtain the matrix $(\widetilde{\B\Theta}_{11})^{-1}$ via rank-one updates (see Section \ref{track-inv}).\\
Update the matrix using Steps \ref{item-11} - \ref{item-13} in  Algorithm~\ref{algo:inner-cd} :  
$\widehat{\B{\Theta}} \leftarrow  \widetilde{\B{\Theta}}  + (\widehat{\B{\omega}} e'_p + e_p\widehat{\B{\omega}}')$

Obtain  $(\widehat{\B{\Theta}})^{-1}$ via rank-one-updates (see Section~\ref{track-inv}). 

Re-permute the matrix to get back the original rows/column indexing.
\item[3] Assign $(\widetilde{\B{\Theta}},(\widetilde{\B{\Theta}})^{-1})\leftarrow (\widehat{\B{\Theta}},(\widehat{\B{\Theta}})^{-1})$ 
\item[4] Upon convergence, the estimates at $\lambda$: 
$(\widehat{\B{\Theta}}_\lambda,(\widehat{\B{\Theta}}_\lambda)^{-1}) : =  (\widehat{\B{\Theta}},(\widehat{\B{\Theta}})^{-1} )$ 
\end{enumerate} 
\end{algorithm}

\paragraph{Convergence criterion}
The convergence criterion is based upon the relative difference in objective values between two successive iterations (i.e. 
sweeps across all the $p$ rows/columns), being less than a threshold. As described later, the objective value is computed on the `fly', so expensive log-det computations need not be done separately.

\paragraph{Initialization of precision and covariance matrices}
\pine requires as input, initialization for the tuple $(\widetilde{\B\Theta},\widetilde{\B{\Theta}}^{-1})$.
In case no prior choice for the input initialization is available,
we use $\widetilde{\B{\Theta}}^{-1} \leftarrow$ $\mathrm{diag}(s_{11} + \rho,\cdots, s_{pp} +\rho)$. 
Note that the diagonals of $\B{\Theta}^{-1}$ at the KKT optimality conditions for (\ref{crit1}) is precisely the vector
$(s_{11} + \rho,\cdots, s_{pp} +\rho)$.

Often \pine is used for computing a path of solutions to (\ref{crit1}) via warm-starts --- in such a case the tuple $(\widetilde{\B\Theta},\widetilde{\B{\Theta}}^{-1})$ is available as a by-product of the algorithm (see Section \ref{sec:warm}).

\subsection{Important Properties of \pine}\label{sec:props}
We outline some of the important properties of our Algorithm --- which is instrumental in making it  
flexible. For ease of exposition the technical details are relegated to the Supplementary Materials Section~\ref{app:sec:prop}.
\subsubsection{Positive definiteness of precision \& covariance matrices across the iterates}\label{sec:prop:pd}
If the starting matrix $\widetilde{\B{\Theta}} \succ \mathbf{0}$, then every row/column update 
in  Step-2 of Algorithm~\ref{algo:block-gl} 
preserves positive definiteness of the updated matrix. 
For a rigorous proof see Section~\ref{sec:pos-def} (supplementary materials).
\subsubsection{Tracking precision and covariance matrices at every iteration}\label{track-inv}
The function (\ref{margin-diag-2}) that arises while updating the $p^{\mathrm{th}}$ row/column requires knowledge of 
$(\widetilde{\B\Theta}_{11})^{-1}$. 
Of course, it is not desirable to compute the inverse from scratch
for every row/column $i$, with a complexity of $O(p^3)$. 
However, if \emph{both} the current precision and covariance matrices i.e. $(\widetilde{\B\Theta},(\widetilde{\B\Theta})^{-1})$ 
are stored at every iteration then it is quite simple to obtain $(\widetilde{\B\Theta}_{11})^{-1}$ via a 
matrix rank-one update as described in (\ref{update-rank-one-1}). This costs $O(p^2)$ and moreover is amenable to parallelism.
Similarly after every row/column update in $\widehat{\B\Theta}$ its inverse can be obtained via a rank-one update as described in 
(\ref{update-rank-one-2}). 
Details of this implementation involve careful attention to details that are presented in the 
Section~\ref{sec:track} (supplementary materials).  

Tracking both $\B\Theta, \B\Theta^{-1}$ along the iteration provides flexibility to our algorithm in terms of: 
\begin{itemize}
\item We avoid the additional cost of matrix inversion --- $O(p^3)$. 
\item Termination at a given computational budget which
is crucial for large scale problems and often desirable for exploratory analysis. 
Since the operating variable is $\B\Theta$ --- the precision matrix estimate is sparse
\footnote{This is different from the dual optimization problem, where the estimated positive definite precision matrices need not be sparse}.
\item They provide the perfect recipe for warm-starts, when one is interested in computing a path of solutions to (\ref{crit1}) (see
Section \ref{sec:warm}). 
\item It gives a simple but efficient way to evaluate the log-determinant of the precision matrices along iterations, since computing the log-likelihood in large problems is a fairly expensive task. 
\end{itemize}

\subsection{Path Seeking Strategy} \label{sec:warm}
In many real life applications it is desirable to compute a path of solutions $\{\widehat{\B{\Theta}}_\lambda\}_\lambda$ over a grid of 
$\lambda$-values $\lambda_K > \lambda_{K-1} > \ldots > \lambda_1$.  
One method is to compute the 
solutions across different tuning parameter values independently of each other, say on different machines. 
Otherwise, they can be computed serially 
wherein warm-starts/continuation strategies turn out to be very effective \citep{FHT2007}.
In such a case, the estimate at $\lambda_{k}$ i.e.  $(\widehat{\B{\Theta}}_{\lambda_{k}},(\widehat{\B{\Theta}}_{\lambda_k})^{-1})$ is 
taken as an input\footnote{Note that the primal formulation is unconstrained --- so warm-starts do not run into infeasibility problems.} for the Algorithm \ref{algo:block-gl} at $\lambda=\lambda_{k-1}$, for every $k = K, \ldots, 2$.
See Section \ref{sec:expts} for experimental studies showing impressive improvements.

\section{Convergence analysis}\label{sec:conv-analysis}
In this section we will analyze the convergence properties of Algorithm \ref{algo:block-gl}. 
We summarize below the novelty and importance of addressing convergence analysis in this paper:

Firstly, our proposal is not a conventional form of block coordinate descent as described in 
\citet{Tseng01,FHT2007a}, where the partial optimization problem (with the other variables fixed) is completely optimized.
A complete-block coordinate minimization when applied to our problem requires a
full minimization in Step 2 of Algorithm~\ref{algo:block-gl}, over the $i^{\mathrm{th}}$ row/column. 
We differ by replacing this conventional \emph{full} optimization strategy by a \emph{partial} optimization --- namely one pass of coordinate descent as described in Algorithm~\ref{algo:inner-cd}. 

Secondly, our objective function of interest in non-smooth and due to the symmetry of the problem, 
the blocks i.e. the rows and columns have shared elements. Since $\theta_{12}=\theta_{21}$ the value gets updated twice --- once at 
row/column=1 and the other at row/column = 2. 
Conventional forms of block coordinate minimization theorems \citep{Tseng01} for non-smooth functions demand separability (in blocks) --- so they 
do not apply directly. \citet{WEN_GOLDFARB_Scheinberg_2009} highlight this issue of overlapping entries and provide a proof of convergence. 
The work of \citet{WEN_GOLDFARB_Scheinberg_2009} considers smooth objectives --- hence the results do not directly apply to our problem.  

Note that by construction the sequence of precision matrices produced by Algorithm~\ref{algo:block-gl} results in a monotone decreasing sequence of objective values. Even if the objective values converge (which is true if they are bounded from below), it is not necessary that the point of convergence corresponds to the minimum of the problem (\ref{crit1}) --- 
the sequence of precision matrices need not converge either (see \citet{Tseng01} for discussions). We address these issues and show that that the precision matrix estimates \emph{converge} to the minimum under the mild assumption $\lambda >0$. 
Convergence holds for $\lambda =0$ under the extra assumption that $\s \succ \M{0}$.

The convergence analysis we present here is to the best of our knowledge novel.

We start with an important Lemma appearing in \citet{Lu:10}[Proposition 3.1]:
\begin{lemma}
\label{lem-covsel}
For every $\lambda>0$, problem (\ref{crit1}) has a \emph{unique} minimizer ---  $\B{\Theta}^*_\lambda$, which is 
(strictly) positive-definite and satisfies: 
$$ \beta I_{p \times p} \geq  \B{\Theta}^*_{\lambda}  \geq \alpha I_{p \times p} $$ 
for scalars $\alpha, \beta$, depending upon $\s,\lambda, p$ with $\infty >\beta \geq \alpha >0$.
\end{lemma}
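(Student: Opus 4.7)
The plan is to establish four claims in turn: uniqueness via strict convexity, a lower spectral bound from first-order conditions, an upper spectral bound from a sublevel-set argument, and existence by combining the last two. Strict convexity of $g$ on $S^{++}_p$ is immediate, since $\B\Theta \mapsto -\log\det\B\Theta$ is strictly convex while $\langle \s,\B\Theta\rangle$ and $\lambda\|\B\Theta\|_1$ are convex; in particular, any minimizer of (\ref{crit1}) must be unique.

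For the lower bound $\B\Theta^*_\lambda \succeq \alpha\M{I}$, I would write down the first-order condition. Since $g$ is convex and the feasible set $S^{++}_p$ is open, a minimizer satisfies $-(\B\Theta^*)^{-1} + \s + \lambda\,\Gamma^* = \M{0}$, where $\Gamma^*$ is a subgradient of the entrywise $\ell_1$ norm with $|\Gamma^*_{ij}|\leq 1$. Taking spectral norms and using $\|\Gamma^*\|_2 \leq \|\Gamma^*\|_F \leq p\max_{ij}|\Gamma^*_{ij}| \leq p$ yields $\|(\B\Theta^*)^{-1}\|_2 \leq \|\s\|_2 + \lambda p$, so the smallest eigenvalue of $\B\Theta^*$ is at least $(\|\s\|_2 + \lambda p)^{-1} =: \alpha$.

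For the upper bound $\B\Theta^*_\lambda \preceq \beta\M{I}$, I would exploit the sublevel set at the identity. Note $g(\M{I}) = \tr(\s) + \lambda p$, so any minimizer obeys $g(\B\Theta^*) \leq g(\M{I})$. Because $\s\succeq\M{0}$ gives $\langle\s,\B\Theta^*\rangle \geq 0$, and because $\|\B\Theta^*\|_1 \geq \mu_{\max}(\B\Theta^*)$ while $-\log\det\B\Theta^* \geq -p\log\mu_{\max}(\B\Theta^*)$, these ingredients combine to give
\[
\lambda\,\mu_{\max}(\B\Theta^*) \;-\; p\log\mu_{\max}(\B\Theta^*) \;\leq\; \tr(\s) + \lambda p.
\]
Since $x \mapsto \lambda x - p\log x$ is coercive on $(0,\infty)$ whenever $\lambda>0$, this pins $\mu_{\max}(\B\Theta^*)$ to a bounded interval and so produces some explicit $\beta = \beta(\s,\lambda,p)<\infty$.

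The step I expect to be the most delicate is producing the minimizer itself, since $S^{++}_p$ is an open cone and minimizing sequences could in principle escape either to infinity or to the PSD boundary. The coercivity argument above controls the former, while the blow-up of $-\log\det\B\Theta$ as $\det\B\Theta\downarrow 0$ controls the latter; together they confine any minimizing sequence to a closed, bounded subset of $S^{++}_p$, on which $g$ is continuous. Existence then follows by a standard compactness argument, strict convexity upgrades existence to uniqueness, and the two spectral bounds derived above apply to the resulting $\B\Theta^*_\lambda$.
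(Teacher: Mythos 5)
Your proof is correct. Note, however, that the paper does not actually prove this lemma: it imports it verbatim from \citet{Lu:10} (Proposition 3.1), so there is no internal argument to compare against line by line. Your self-contained derivation is sound and follows the standard route. The uniqueness claim via strict convexity of $-\log\det$ is fine; the lower bound $\mu_{\min}(\B{\Theta}^*)\geq(\|\s\|_2+\lambda p)^{-1}$ from the stationarity condition $(\B{\Theta}^*)^{-1}=\s+\lambda\Gamma^*$ with $\|\Gamma^*\|_2\leq\|\Gamma^*\|_F\leq p$ is exactly the bound that appears in \citet{BGA2008} (and in a commented-out earlier version of this lemma in the source); and your existence argument correctly handles both failure modes of the open cone $S^{++}_p$ (escape to infinity, controlled by coercivity of $x\mapsto\lambda x-p\log x$; collapse to the boundary, controlled by the blow-up of $-\log\det$ on the sublevel set, which requires the upper eigenvalue bound you established first --- the order of the steps matters and you have it right). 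The one place your argument differs in substance from the literature the paper cites is the upper bound: \citet{BGA2008} obtain the sharper and more explicit $\|\B{\Theta}^*\|_2\leq p/\lambda$ from the dual identity $\tr(\s\B{\Theta}^*)+\lambda\|\B{\Theta}^*\|_1=p$, whereas your sublevel-set comparison against $g(\M{I})$ is more elementary and avoids duality at the cost of a less explicit constant $\beta(\s,\lambda,p)$. Either constant suffices for the role the lemma plays in the convergence analysis (uniform boundedness of the Lipschitz constants $L_{k,i}$), so nothing downstream is affected.
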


We are now ready to state the main theorem establishing the convergence of Algorithm~\ref{algo:block-gl}
\begin{theorem}[Asymptotic Convergence of \pine]
\label{thm:conv}
Take $\lambda>0$. 
Let $\B{\Theta}_{k}$ be the estimate of the precision 
matrix obtained at iteration $k$ i.e. on completion of Step 3 of Algorithm~\ref{algo:block-gl}. 
Then the following hold true:
\begin{enumerate}
\item[(a)] The sequence of objective values is monotone decreasing :
\begin{equation}\label{monotone}
g(\B{\Theta}_{k+1}) \leq g(\B{\Theta}_k), \forall k \geq 1. 
\end{equation}
The sequence $\{g(\B{\Theta}_{k})\}_k$ converges to the optimal solution of problem (\ref{crit1}).
\item[(b)] The iterates $\B{\Theta}_k \succ \mathbf{0}, \forall k$ and the sequence \emph{converges} to $\B{\Theta}_{\infty}$ --- the unique solution to (\ref{crit1}).
\end{enumerate}
\begin{proof}
The proof, which is rather detailed and technical is provided in the Appendix, Section \ref{proof-conv}.
\end{proof}
\end{theorem}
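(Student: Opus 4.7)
The plan is to prove (a) first, then use compactness to extract subsequential limits of the iterates, characterize every such limit as the unique minimizer via Lemma~\ref{lem-covsel}, and thereby deduce (b).

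For part (a), I would check that every atomic step in one full sweep is a descent step on $g$. The single-coordinate update (\ref{soft-thre-1})--(\ref{soft-thre-2}) is the closed-form exact minimizer of the strongly convex one-dimensional problem obtained by restricting $g_p(\B\theta_{1p})$ to the $i^{\mathrm{th}}$ coordinate, so each such step weakly decreases $g_p(\B\theta_{1p})$, hence weakly decreases $g_p(\theta_{pp},\B\theta_{1p})$ in (\ref{split}). The diagonal update (\ref{update-diag1}) is an exact one-dimensional minimization of $g_p$ in $\theta_{pp}$. Chaining all $p-1$ soft-thresholding updates with the diagonal update weakly decreases $g(\B\Theta)$ evaluated along the direction (\ref{def-omega}), because the symmetry $\theta_{ij}=\theta_{ji}$ is already absorbed into the reduction leading to (\ref{split}) (so unlike conventional block coordinate descent on asymmetric blocks, the overlapping-entry issue of \citet{WEN_GOLDFARB_Scheinberg_2009} does not bite us here). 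Aggregating across the $p$ row/column updates gives (\ref{monotone}); combined with $g(\B\Theta)\geq g(\B\Theta^{*}_{\lambda}) > -\infty$ from Lemma~\ref{lem-covsel}, the sequence $\{g(\B\Theta_k)\}$ converges.

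For the compactness side of (b), positive definiteness along the iterates is guaranteed by Section~\ref{sec:prop:pd}. Coercivity of $g$ on $S^{++}_p$ (the $-\log\det\B\Theta$ term blows up as the smallest eigenvalue $\to 0$, while $\langle\s,\B\Theta\rangle+\lambda\sum_{ij}|\theta_{ij}|$ blows up as eigenvalues $\to \infty$ for $\lambda>0$ and $\s\succeq 0$) implies the sublevel set $\{\B\Theta\succ 0 : g(\B\Theta)\leq g(\B\Theta_1)\}$ is compact, with eigenvalues of the iterates uniformly bounded inside $[\alpha', \beta']$ for positive constants. So $\{\B\Theta_k\}$ has convergent subsequences, and any limit point $\B\Theta^{\star}$ lies in the interior of $S^{++}_p$.

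The main obstacle --- and where the \emph{inexactness} of the inner optimization bites --- is showing that every limit point $\B\Theta^{\star}$ is in fact the KKT point of (\ref{crit1}). My plan: let $\B\Theta_{k_j}\to\B\Theta^{\star}$; by monotone convergence of the objective, consecutive differences $g(\B\Theta_{k_j})-g(\B\Theta_{k_j+1})\to 0$, which telescopes over the atomic steps in one sweep to force the decrease at every single-coordinate soft-thresholding step to vanish along the subsequence. Because of the uniform eigenvalue bounds on limit points, the one-dimensional quadratic curvature $b_i=(s_{pp}+\lambda)(\B\Theta_{11}^{-1})_{ii}$ appearing in (\ref{soft-thre-2}) stays bounded below by a strictly positive constant, so the decrease at step $i$ is lower-bounded by a positive multiple of $\widehat{\omega}_i^{\,2}$. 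Hence every coordinate increment $\widehat{\omega}_i\to 0$ along the subsequence, which, by continuity of the soft-thresholding map, is precisely the statement that $\B\Theta^{\star}$ satisfies the coordinate-wise subdifferential optimality condition in every entry --- exactly the KKT system for (\ref{crit1}). By Lemma~\ref{lem-covsel} the KKT point is unique, so $\B\Theta^{\star}=\B\Theta^{*}_{\lambda}$. Since every subsequential limit of the bounded sequence $\{\B\Theta_k\}$ coincides with $\B\Theta^{*}_{\lambda}$, the whole sequence converges, establishing (b) and completing the proof.
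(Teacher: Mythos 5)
Your proposal is correct and follows the same overall architecture as the paper's proof: monotone descent plus a lower bound on the objective (Lemma~\ref{lem-covsel}) gives convergence of $\{g(\B\Theta_k)\}$; a sufficient-decrease inequality converts the vanishing of successive objective differences into vanishing of successive iterate differences; uniform spectral bounds on the iterates keep the relevant curvature/Lipschitz constants bounded away from $0$ and $\infty$; every limit point is then shown to satisfy the stationarity system $-\B\Theta^{-1}+\s+\lambda\,\partial\|\B\Theta\|_1\ni\M{0}$; and uniqueness of the minimizer upgrades subsequential convergence to convergence of the whole sequence. The one substantive difference is in how the inexact inner step is handled. The paper deliberately switches to a proximal-gradient variant of Step~\ref{item-12} --- a single generalized gradient step (\ref{block-proximal}) on the whole vector $\B\theta_{1p}$ --- and invokes the sufficient-decrease property of the $\ell_1$ proximal map (Lemma~\ref{lem:suff-decrease}, in the style of Lemma~2.3 of \citet{fista-09}), explicitly setting aside the cyclical coordinate-descent version as ``adding to the technicality.'' You instead analyze the algorithm as actually stated: one sweep of exact one-dimensional minimizations, each of which is strongly convex with modulus $b_i=(s_{pp}+\lambda)(\B\Theta_{11}^{-1})_{ii}$ bounded below via the uniform eigenvalue bounds, yielding a per-coordinate decrease of order $\widehat\omega_i^{\,2}$ that telescopes over the sweep. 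Your route buys a proof for the implemented inner loop (and a cleaner coercivity justification of the uniform bounds (\ref{eq:unif-bddness}), which the paper states rather tersely), at the price of a more granular telescoping and fixed-point argument; the paper's route buys a shorter argument by bundling the whole row/column update into one prox step. Your passing claim that coordinate-wise subdifferential optimality of the limit is equivalent to the global KKT system is valid here precisely because the nonsmooth part of $g$ is separable across entries and the symmetric duplication of $\theta_{ij}=\theta_{ji}$ produces the same scalar condition from both of its row/column blocks --- it would be worth saying this explicitly, since it is the point at which the overlapping-blocks objection of \citet{WEN_GOLDFARB_Scheinberg_2009} is actually discharged.
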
 
\section{Some variants of \pine}
This section discusses some variations to our proposal \pine --- leading to two important variants.

A variant of Algorithm~\ref{algo:inner-cd} is having a counter for the number iterations for Step~\ref{item-12}, say $T_o$.
Our proposal of \emph{inexact} minimization and for that matter the overall complexity analysis demands 
$T_o=O(1)$ i.e. $T_o \ll p$.  
In our numerical experiments we found $T_o \leq 2$ to be quite practical.

If $T_o$ is taken to be arbitrarily large, we get the conventional form of cyclical coordinate descent used for $\ell_1$ 
regularized quadratic programs (QP) \citep{FHT2007}. 
The magnitude of $T_o$ depends upon the accuracy of the solution for the $\ell_1$ regularized QP. 
In general, for a high accuracy solution, this can be arbitrarily large. If $T_o = O(p)$ this leads to a $O(p^3)$ complexity of Algorithm \ref{algo:inner-cd}.  
See Section \ref{sec:complexity} for details. We call this variant 
\MT{P}rimal \MT{E}xact Minimization for \MT{G}raphical \MT{L}asso i.e. \pex --- this is the more conventional form of block coordinate descent applied on the problem (\ref{crit1}). 

We now proceed to discuss another simple but important variant of our algorithm \pine, namely a `growing' strategy 
--- which we call \MT{P}rimal \MT{GR}owth for \MT{G}raphical \MT{L}asso i.e. \pgr.
\subsection{Primal Growth for Graphical Lasso (\pgr)}\label{sec:grow} 
Given an initial working dimension $p_0$ (typically $p_0=1$) and estimates of the precision and the covariance matrix
$(\widetilde{\B{\Theta}}_{p_0 \times p_0},(\widetilde{\B{\Theta}}_{p_0 \times p_0})^{-1})$, Algorithm~\ref{algo:grow}  
(Supplementary Materials, Section~\ref{sec:grow:algo}) describes the task of obtaining the 
solution to (\ref{crit1})  (with $\s$ having dimension $p \times p$). 
The main idea is to perform an initial 
forward pass by \textit{incrementally} appending rows/columns and operating   
Step 2 of Algorithm~\ref{algo:block-gl} on the just added row/ column. Once the growing matrix is saturated to have
 $p$ rows/columns --- we make further passes through the $p$ rows/columns, via Step-2 of Algorithm~\ref{algo:block-gl}, till convergence.
Since the `growing' stage of the algorithm performs mainly cheap computations, it helps in  
providing pretty accurate warm-starts/ initializations $\widetilde{\B{\Theta}}, (\widetilde{\B{\Theta}})^{-1}$ to \pine, within a very short amount of time. See also results in Section~\ref{sec:expts}.
When the task is to solve (\ref{crit1}) for a single value of $\lambda$, the method \pgr often turns out to be quite competitive with 
\pine. 

\begin{remark}
The convergence of \pgr is straightforward. Firstly, it is not hard to see that the iterates maintain positive definiteness of the precision and its inverse and furthermore, since \pine comes into action after one full-sweep of incrementally growing rows/columns, the convergence analysis for \pine carries over.
\end{remark}
\section{Computational Complexity}\label{sec:complexity}
Here we describe the computational complexities of our proposed algorithms \pine, \pex and \pgr.
We provide a summarized report here, the details are available in Appendix, Section \ref{comp-complex-append}.\\
Cost of \pine :  Every row/column update requires $O(p^2)$, and for a full sweep across $p$ rows/columns --- this is $O(p^3)$. For $\kappa$ full sweeps across $p$ rows/columns this is $O(\kappa p^3)$, typically convergence occurs within $\kappa \approx$ 2 - 10. See Section \ref{sec:pine}.

Cost of \pex : For every row/column the cost at the worst is $O(p^3)$. For a total of $\kappa'$ ($\approx$ 4-10) sweeps across all rows/columns the cost is $O(\kappa'p^4)$. The cost may reduce to $O(p^3)$ in case $\lambda$ is quite large. See Section \ref{sec:pex}.

Cost of \pgr : The cost here is $O(p^3)$ as in \pine --- but the constants are generally better than that of \pine. See Section \ref{sec:pgr}.

\section{Related work}\label{sec:related}
In this section we briefly describe some of the state-of-the art algorithms for criterion (\ref{crit1}), their computational complexities 
and their fundamental differences with our proposal(s).

The block coordinate proposals of \citet{BGA2008,FHT2007a} are related to our proposal --- they solve the dual of the problem (\ref{crit1}), which is given by:
\begin{equation} \label{covsel-dual} 
\max_{\|\mathbf{V}\|_\infty \leq \lambda}  -\log\det(\s + \mathbf{V}) - p.  
\end{equation} 
By strong duality the optimal solution of problem (\ref{covsel-dual}) and (\ref{crit1}) are the same,
the primal-dual relationship being $(\B\Theta)^{-1}= \s + \mathbf{V}$. 
(\ref{covsel-dual}) operates on the covariance matrix whereas the primal problem (\ref{crit1}) operates on the precision matrix.
As pointed out earlier, there is significant difference in pursuing the primal approach versus the dual. 
Often in real-life applications (as is the case in a principal motivating application for this paper \citep{AZM-11}) 
one desires an approximate solution since it gives a fairly good statistical estimate for the main statistical estimation problem. 
An approximate solution in the dual space need not translate to one of similar accuracy in the primal space according to criterion (\ref{crit1}). Further the dual approach does not produce sparse precision matrices ---
if $\widehat{\M{V}}$ solves (\ref{covsel-dual}), then the precision matrix 
$\widehat{\B\Theta}=  (\s + \widehat{\M{V}})^{-1}$  is \emph{not} sparse unless (\ref{covsel-dual}) is solved till high 
tolerance ($10^{-8}$ --$10^{-10}$) .
Ad-hoc thresholding strategies / post-processing strategies can be used to sparsify $\widehat{\B\Theta}$ --- but positive definiteness is not guaranteed. 

The block coordinate maximization of \citet{BGA2008} on (\ref{covsel-dual}), requires \emph{solving}
 a box-constrained QP  completely --- with 
cost $O(p^3)$. The \GL (Graphical Lasso) Algorithm of \citet{FHT2007a} minimizes the dual of the same box-constrained QP --- an $\ell_1$ regularized QP via cyclical coordinate descent. In the worst case this can be $O(p^3)$, 
in case the solutions are very sparse this is  $O(p^2)$. 
Inexact minimization strategies for \GL do not guarantee convergence. 
\GL need not produce a sparse and positive definite precision matrix unless it converges to a high accuracy.

To summarize, both the block-coordinate proposals of \citet{BGA2008,FHT2007a} have a worst case cost $O(p^4)$ ---  
the latter can improve to 
$O(p^3)$ if $\lambda$ is very large.

The gradient based algorithm of \citet{BGA2008} inspired by \citet{nest_05} has a per-iteration complexity 
$O(p^3)$ and overall complexity $O(\frac{p^{4.5}}{\epsilon})$ (for an $\epsilon >0$ accurate solution).

Another very efficient gradient-based algorithm is \textsc{smacs} proposed in \citet{Lu:10}, which also solves the dual formulation. This has per iteration complexity $O(p^3)$ (due to expensive matrix operations like eigen-decompositions, matrix inversions)
and an overall complexity of $O(\frac{p^4}{\sqrt{\epsilon}})$. 

The number of iterations taken by \GL \citep{FHT2007a,BGA2008} and \pine (and its variants like \pex, \pgr) i.e. full sweeps across all rows and columns are relatively small in most examples --- of the order of 4-10. For a solution of similar accuracy, the number of gradient iterations for \sSMACS is often of the order of hundreds (or even more than a thousand) for problems of size 1000/1500. 

It appears that most existing algorithms for solving the sparse covariance selection problem have
a complexity of $O(p^4)$ or possibly larger, depending upon the algorithm used and the desired accuracy of the solution --- making computations for (\ref{crit1}) almost impractical for values of $p$ much larger than 1000/1500.

In contrast,  every row/column update of \pine is $O(p^2)$ --- overall for $\kappa$ sweeps across all rows/columns this is $O(\kappa p^3)$, where $\kappa$ denotes the total number of sweeps across all the rows/columns (See Section \ref{sec:complexity}). 
This is clearly an order of magnitude improvement over existing algorithms and is further substantiated by our experiments.

\section{Experimental Studies : synthetic examples}\label{sec:expts}
This section provides a comparison of our proposed fitting methods with some state-of-the-art algorithms 
for the optimization problem (\ref{crit1}). 

We use our main proposal \pine, its close cousin \pgr, and the variant \pex for comparisons. 

Among the existing algorithms, 
\citet{Lu:10} was observed to be better than the proposal of \citet{Lu:09}, so we used the former for our comparisons.
\citet{Scheinberg_Ma_Goldfarb_2010,Yuan_2009,boyd-admm} describe algorithms based on the Alternating Direction Methods of Multipliers --- among them the algorithm of \citet{boyd-admm} was publicly available at Stephen Boyd's website. 
We experimented with this algorithm, but found it to be slower than \GL, so we did not include it for our comparisons.

We thus compared our proposals with two very efficient algorithms : 
\begin{description}
\item [\sglasso:] The algorithm of \citet{FHT2007a}. We used the MATLAB wrapper around their Fortran code --- available at 
\url{http://www-stat.stanford.edu/~tibs/glasso/index.html}. 
\item[\sSMACS:] denotes the algorithm of \citet{Lu:10}. We used the MATLAB
implementation \verb$smooth_covsel$ 
available at \url{http://people.math.sfu.ca/~zhaosong/Codes/SMOOTH_COVSEL/}.
\end{description}
Note that the default convergence criteria for the above two algorithms
are different --- \GL checks the successive changes in the diagonals
of the precision matrix, whereas \sSMACS\ relies on duality gap.
Moreover \GL and \sSMACS\ operate on the dual, whereas our proposals
\pine, \pgr, \pex operate on the primal.  
Since, solving (\ref{crit1}) is the main goal, to make comparisons
fair, we compared the primal likelihoods of the estimates produced by
the algorithms.

A relatively weak convergence criterion on the dual 
is typically quite far off from a sparse and positive definite precision matrix.
The \GL algorithm
tracks a precision matrix $\B\Theta$ and covariance matrix $\M W$
along the iterations but $(\B\Theta)^{-1} \neq \M W$ and the
discrepancy can be quite large before the algorithm converges to a
high accuracy in the dual space. Furthermore, even if the estimated precision matrix
(prior to convergence) is sparse it need not be positive definite.
\sSMACS\ produces estimates of precision matrices $\B\Theta$ along the
iterations --- though they are positive definite, they are dense. Arbitrary thresholding (to zero) of
the smaller entries may destroy positive definiteness of the matrix.

 Our proposal on the other hand at every iteration tracks 
the precision matrix (which is both sparse
 and positive definite) and its (exact) inverse. 

In order to make the primal and dual problems
comparable we consider the times taken by the algorithms to converge
till a relatively high tolerance i.e. TOL=$10^{-5}$, where 
\begin{equation} \label{eq-tol}
\text{Convergence Test Criterion:} \quad
\frac{( g(\B\Theta_k) - \widehat{g(\B\Theta_{*})} ) } {|\widehat{g(\B\Theta_{*})}|} < \text{TOL}.
\end{equation} 
Here $\B\Theta_k$ is the estimate of the precision matrix produced by the respective algorithm at the end of iteration $k$, 
and  $\widehat{g(\B\Theta_{*})}$ is the estimate of the minimum of (\ref{crit1}) obtained by taking 
the minimum over different algorithms after running them for a large number of iterations\footnote{In our examples we ran 
\pine, \pex, \pgr (each) for 30 iterations. They were enough to give solutions till an accuracy of $10^{-8}$.}.

All of our computations are done in MATLAB 7.11.0 on a 3.3 GhZ Intel Xeon processor, with single-computational-thread computations enabled.
Our codes are written in MATLAB and C \footnote{The C code was generated via the embedded-matlab function { \tt emlc}, an automated 
C code generator in Real Time Workshop in MATLAB.}. \GL is written entirely in Fortran. 
\sSMACS is written in MATLAB --- we don't think this puts \sSMACS at a (timing) disadvantage, since the major computations are matrix operations which are very well optimized in MATLAB.
\subsection{Algorithm Timings}\label{sec:synthetic}
The simulation examples we used were inspired by \citet{Lu:10}.  
The population precision matrix $\Sigma^{-1}_{p \times p}$ having approximately $0.01$ proportion of non-zeros is generated as follows.
We generate a matrix $A_{p \times p}$ with entries in $\{-1,0,1\}$, with  
proportion of non-zeros $0.01$. The $-1$ and $1$ occur with equal probability. 
$A$ is symmetrized via 
$A \leftarrow 0.5 \cdot (A + A')$. All the eigen-values of $A$ are lifted up by adding a scalar multiple of a 
identity matrix : $\Sigma^{-1} \leftarrow A + \tau I_{p \times p}$ such that the minimal 
eigen-value of $\Sigma^{-1}$ is one. The (population) covariance matrix is taken to be $\Sigma$.
We then generated $x_i \sim MVN(0,\Sigma),i = 1,\ldots, N$. The sample correlation matrix was taken as $\s$.
 
We considered a battery of examples with varying $N,p$:\\
(a)  $N \in \{500,1000,2000\}$ for $p = 1000$ and (b) $N \in \{2000,3000,4000\}$ for $p = 1500$

\begin{table}[htpb!]
  \begin{tabular}{| l | c |  c c c c c | }     \hline
\multirow{2}{*}{p / N }         & \% of &   \multicolumn{5}{|c|}{Algorithm Times (sec)}   \\
   &   nnz & \pgr & \pex & \pine & \GL & \sSMACS  \\ \hline 
\multirow{3}{*}{$1\times 10^3$ / $2\times 10^3$}  &  92  & \bf 48.8 & 140.4 & 119.4 & 143.8 &  308.5 \\ 
 & 78 & 143.3 &  130.5 &  \bf 94.7 &  151.8 &  288.6 \\ 
                       & 46 & 149.7 & 167.4  & \bf 108.6  &220.1  &217.7    \\ \hline
\multirow{3}{*}{$1\times 10^3$ / $1\times 10^3$} & 85 & 79.1 & 143.1 &  \bf 66.9 &  130.4  & 398.7 \\ 
 &71 &  143.2 &   150.6 & \bf 69.0 & 160.5 & 408.2 \\
&49 & \bf 132.6 & 225.3 & 187.2 & 295.7 & 464.2 \\ \hline
\multirow{3}{*}{$1\times 10^3$ / $0.5\times 10^3$}  &  91 &  82.4 & 93.8 &\bf 66.1 & 94.4  & --- \\
 & 73 & \bf 81.5 &   123.2 &  119.5 & 179.7 & --- \\
& 48 &  354.8 & 382.4 &  \bf   340.2 & 544.5  & --- \\ \hline
 \multirow{3}{*}{$1.5\times 10^3$ / $4\times 10^3$} & 86 &  223.2 & 258.7 & \bf 186.3 & 571.1 & 2310.4 \\
& 72 &   221.8 &  353.4 &\bf 186.5 & 577.8 & 1534.5 \\
& 47 &  \bf 401.8 & 656.1 & 488.4 &  851.8 &  1062.8 \\ \hline
\multirow{3}{*}{$1.5\times 10^3$ / $3\times 10^3$} & 86 & 212.6 & 228.9 & \bf 177.4 & 533.3 & 1736.3 \\
&72 & 221.3   & 256.4 &  \bf 186.0 &  573.7 &  2017.2 \\ 
& 48 & 525.6 & 675.8 & \bf 494.2 & 880.4 & 1521.4 \\ \hline
\multirow{3}{*}{$1.5\times 10^3$ / $2\times 10^3$} & 85 & 283.3 & 364.8 & \bf 222.7 & 566.5 &  1759.3 \\
& 72 &  222.6 &   258  & \bf 186.3 &  574.2 &   2246.9 \\
& 40 &   757.8 &   1019.7 &\bf 706.6 & 1186.6 &  1780.3 \\\hline
 \end{tabular} 
\caption{Table showing the times in seconds till convergence to a tolerance of TOL$=10^{-5}$ (\ref{eq-tol}), for different algorithms for different problem set-ups. 
For every combination of $(p,N)$ three different
$r\lambda$ values are considered --- as indicated by the \% non-zeroes in the final solution for \pex. 
All algorithms are warm-started at their default starting values. The `---'
corresponding to \sSMACS indicates that the algorithm did not converge for this example with $N < p$.} \label{tab:one}
\end{table}

Table \ref{tab:one} summarizes the timing results (in seconds) for the examples above for different algorithms. 
The timings are shown for different $\lambda$ values --- algorithms are cold-started at their default starting points.  
We see that \pgr, \pine are always the winners and often by multiplicative factors. 
\pine turns out to be the clear winner overall.
\pex turns out to be slower than \pine --- which supports our crucial idea of inexact minimization in the inner-blocks and also supports our complexity analysis. As expected, the timings for the block coordinate algorithms deteriorate for smaller values of $\lambda$. 
For dense problems (which are arguably harder problems for the primal formulation), \pgr consistently performs quite well.
\pex and \GL often perform similarly. \sSMACS tends to be quite slow for larger problems, when compared to the block coordinate counterparts. We found \sSMACS to be quite competitive for very small values of $\lambda$ --- but these (almost) unregularized solutions are not much statistically meaningful unless $n > p$. \sSMACS faced problems with convergence for $n < p$ situations where $\s$ was low-rank.

These results demonstrate the impressive comparative performances of \pine and \pgr  compared to current 
state-of-the-art methods --- making it probably a method of choice in scenarios where it is possible to run the fitting
algorithms till a high tolerance.
As mentioned earlier, the primal formulation is particularly suited for this task of delivering solutions with lower tolerances.
It operates on the primal (\ref{crit1}) delivers a sparse and positive definite precision matrix and its exact inverse.  
Table \ref{tab:three} (Supplementary Material, Section \ref{sec:low-high-accuracy}) 
shows average timings in seconds across a grid of ten $\lambda$ values with varying degrees of accuracy.
\pine, \pex and \pgr return lower accuracy solutions to 
(\ref{crit1}) --- in times which are much less than that taken to obtain higher accuracy solutions.
The gains are rather substantial given the limited scope of the dual optimization problems in the `early stopping' paradigm.

The next section compares different algorithms as `path-algorithms'. Path-based-algorithms and algorithms operating on a single value of $\lambda$ are quite different performance-wise. An algorithm that tends to be very fast as a path algorithm need not be as good at a single value of $\lambda$. This is because, a good warm-start improves the convergence-rate of the algorithm.
Similarly, an algorithm that is very good at a single value of $\lambda$, may not benefit much from warm-starts (a typical example being interior point methods). This is why we compare our proposals in both scenarios.   
\subsection{Tracing out a path of solutions}\label{sec:path-algo}
Continuing with Section~\ref{sec:warm}, we see what happens to the rate of convergence of \pine in 
presence of warm-starts. 
Note that \sSMACS and \pgr do not allow for warm-starts but \GL and \pex do.
The data is the same as used in the previous section. We took a grid of ten $\lambda$ values as follows:
the off-diagonal entries of the sample covariance matrix $\s$ were sorted as per their absolute values and ten $\lambda$ values were chosen from the entire range (along equi-spaced percentiles of the absolute values in $\s$) --- 
the largest $\lambda$ value being $\eta_{\max}q$ and the smallest was 
$\eta_{\min} \cdot q$, where $ q:= \max_{i > j} |s_{ij}|$. All algorithms were run till a tolerance of $10^{-5}$. Table \ref{tab:3} summarizes the results.
\begin{table}[htpb!]
  \begin{tabular}{| l | c | c | c c c  | c |}     \hline
\multirow{2}{*}{p / N }   & \multirow{1}{*}{$\eta_{\max}$/  $\eta_{\min}$} &average \%& \multicolumn{3}{c|}{Algorithm Times (sec)}   &speed-up\\
                           &       ($\times 10^{-2}$)          &                          of nnz  &  \pine & \GL  & \sSMACS &  (\pine)\\ \hline 
\multirow{1}{*}{$1\times 10^3$ / $2\times 10^3$} &  16/ 0.64 & 61.3 &   \bf 77.27   & 144.10   & 250.43  & 1.56 \\
\multirow{1}{*}{$1\times 10^3$ / $1\times 10^3$} &  21/ 0.83 & 62.8  &    \bf 116.38    & 202.73& 412.14 & 1.52 \\
\multirow{1}{*}{$1\times 10^3$ / $0.5\times 10^3$} & 28 / 2 &  66.7 &   \bf 105.52  &  315.17 &    --- & 1.89 \\ 
\multirow{1}{*}{$1.5\times 10^3$ / $4\times 10^3$} &  13.1 / 0.4 & 62.4  &  \bf 260.54 & 579.52 & 145.35 & 1.28\\ 
\multirow{1}{*}{$1.5\times 10^3$ / $3\times 10^3$} &  14.3 / 0.53 &  62.8 & \bf 280.19 &  613.67 &  1631.6 & 1.23\\ 
\multirow{1}{*}{$1.5\times 10^3$ / $2\times 10^3$} &  16.0 / 0.61& 63.1 &   \bf 267.03 &  697.65 &  1892.1 & 1.53\\ \hline
\end{tabular}  
\caption{Table showing the comparative timings (in seconds) of the three algorithms \pine , \GL and \sSMACS across a grid of ten $\lambda$ values. Times are \emph{averaged} across the ten $\lambda$ values. The averaged \% of non-zeros in the final solution across the different $\lambda$ values along with the limits of the $\lambda$ values are also shown. The last column shows the speed-up factor for \pine using warm-starts over the time spent to compute the solutions of the same accuracy without using warm-starts.} \label{tab:3}
\end{table}

As the column on speed-up factor shows, the path algorithm of \pine is much faster than 
obtaining the solutions at the same values of $\lambda$ without warm-starts.
\pine continues to perform very well when compared to the path algorithm \GL.

\section{Real Application: Learning Precision graphs for Movie-Movie Similarities}
\textbf{ MovieLens Data Set:} We study an application of the inverse
covariance estimation method on a dataset obtained from a movie
recommendation problem. We use the benchmark MovieLens-1M dataset
available at \url{http://www.grouplens.org/node/12}, which consists of
1M explicit movie ratings by 6,040 users to 3,706 movies. 
The explicit
ratings are on a 5-point ordinal scale, higher values indicative of
stronger user preference for the movie. 
The statistical problem that has received
considerable attention in the literature is that of predicting
explicit ratings for missing user-movie pairs.  Past
studies~\citep{ruslan} have shown that using movie-movie similarities
based on ``who-rated-what'' information is strongly correlated with
how users explicitly rate movies. Thus using such information as user covariates
helps in improving predictions for explicit ratings. Other than using it as covariates,
one can also derive a movie graph where edge weights represents movie similarities that
are based on global ``who-rated-what'' matrix. Imposing sparsity on such a graph is
attractive since it is intuitive that a movie is generally related to only a few other movies. 
This can be achieved through {\OMOD}. Such a graph provides a neighborhood structure
that can also help directly in better predicting explicit ratings. For instance, in predicting
explicit rating $r_{ij}$ by user $i$ on movie $j$, one can use a weighted average of ratings by the user
in the neighborhood of $j$ derived from the movie-movie graph. 
Such neighborhood information can also
be used as a graph Laplacian to obtain better regularization of user factors in matrix
factorization model as shown in \citet{lurecsys09}. 

Other than providing useful information to predict explicit ratings,
we note that using who-rated-what information also provides
information to study the relationships among movies based on user
ratings. We focus on such an exploratory analysis here but note that
the output can also be used for prediction problems following
strategies discussed above. A complete exploration of such strategies
for prediction purposes in movie recommender applications is involved and
beyond the scope of this paper.

We define the sample movie-movie
similarity matrix as follows: for a movie $j$,  
$\B{x}_j$ is the binary indicator vector denoting users who rated movie $j$. The 
similarity between movie $j$ and $k$ is defined as 
$s_{jk} = \frac{\B{x}_j'\B{x}_k}{\sqrt{\sum\limits x_{j,l}\sum\limits x_{k,l}}}.$ 
The movie-movie similarity matrix $\M{S}$ thus obtained is 
symmetric and postive semi-definite.
\subsection{Timing comparisons}\label{sec:timemlens}
As noted earlier,
for this application we use criterion (\ref{crit1}) in a non-parametric fashion, where our intention is to learn the 
connectivity matrix corresponding to the sparse inverse covariance matrix. 
We will first show timing comparions of our method \pine (the path-seeking version) along with \GL and \sSMACS. 
We see that \pine is the only method that delivers solutions within a reasonable amount of time --- the estimated 
precision matrices are used to learn the connectivity structure among the items. 

We ran \GL for nine $\lambda$ values -- which were equi-spaced quantiles between the 8-th and 75-th percentile 
of the entries $\{|s_{ij}|\}_{i > j}$ --- this range also covers 
estimated precision matrices that are quite dense.

The path versions of \pine, \GL and \sSMACS were used to obtain solutions on the chosen grid of $\lambda$ values. 
The timings are summarized below:
\begin{itemize}
\item \pine produced a path of solutions across the nine $\lambda$ values using warm-starts in a \emph{total} of 6.722 hours. 
\item The path version of \GL on the other hand, could not complete the same task of computing solutions to (\ref{crit1}) on the same set of nine $\lambda$ values, within two full days.
\item  We also tried to use \sSMACS for this problem, but it took more than 14 hours to compute the solution corresponding to a single value of $\lambda$.
\end{itemize}
The timing advantages should not come as a surprise given the computational complexity of \pine is order(s) of magnitude better than its competitors. The performance gap becomes more prominent with increasing dimensions --- traces of evidence were  
observed in Section~\ref{sec:expts}.
\subsection{Description of the Results}\label{sec:realmlens}
Figure~\ref{fig-spy} (Section~\ref{sec:spy} of Supplementary Materials) displays the 
nature of the edge-structures and how they evolve across varying strengths of the shrinkage 
parameter $\lambda$ for the \emph{whole} precision-graphs produced by \pine.

For a fixed precision matrix, 
a natural sub-set of `interesting' edges among the all-possible $p(p-1)/2$ edges are the ones corresponding to the 
top $K$ absolute values of partial correlation coefficients. 
The nodes corresponding to these $K$ edges and the edges of the precision graph restricted to them form a sub-graph of the $p\times p$ precision graph. 
We summed the absolute values of the off-diagonal entries of the precision matrices across the different $\lambda$ values. The (averaged) 
$p(p-1)/2$ values were ordered and the top ten entries were chosen. These represent the partial correlations having 
the maximal strength (on average) across the different $\lambda$ values taken. 
There were 20 vertices corresponding to these top ten partial correlation coefficients. 
Figure \ref{fig:subgraphs} shows the sub-graphs of the movie-movie precision graphs restricted to the selected 20 movies, across different $\lambda$ values. The movie to ID mappings are in Table \ref{tab:movies}, in Section \ref{sec:movie-map} at 
Supplementary Materials Section.
The edges in these subgraphs provide some interesting insights. For instance, consider the sub-graph corresponding to the largest $\lambda$ (highest sparsity). Part of strong connectivity among movies 0,1,2,3 is expected since 0,1 and 2,3 are sequels. It is interesting to see there is a connection between 0 and 3,
both of which are Sci-fi movies related to aliens. Other connections also reveal interesting patterns, these can be investigated using the IMDB movie database.

\begin{figure}[htpb!]
\begin{center}
$\begin{array}{ccc}
\includegraphics[angle=270,totalheight=.4\textheight,width=.32\textwidth]{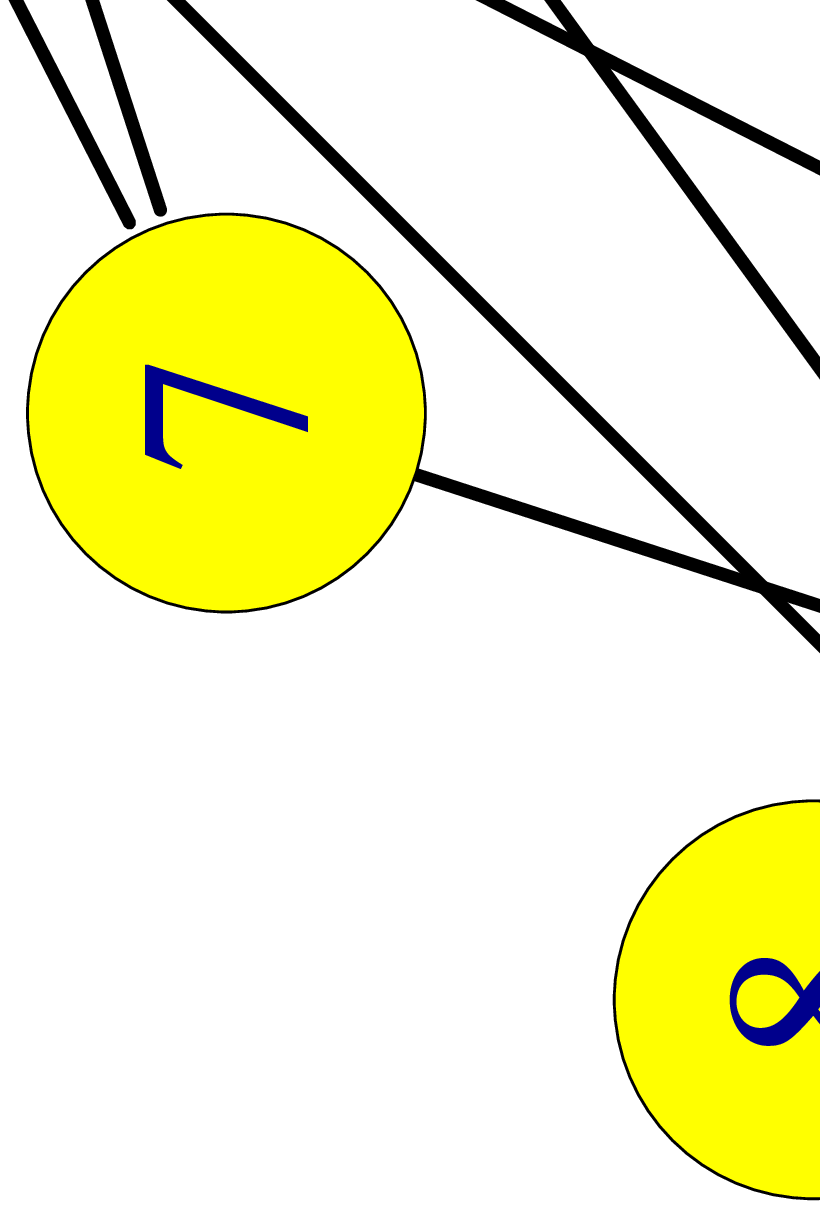} &
	\includegraphics[angle=270,totalheight=.4\textheight,width=.32\textwidth]{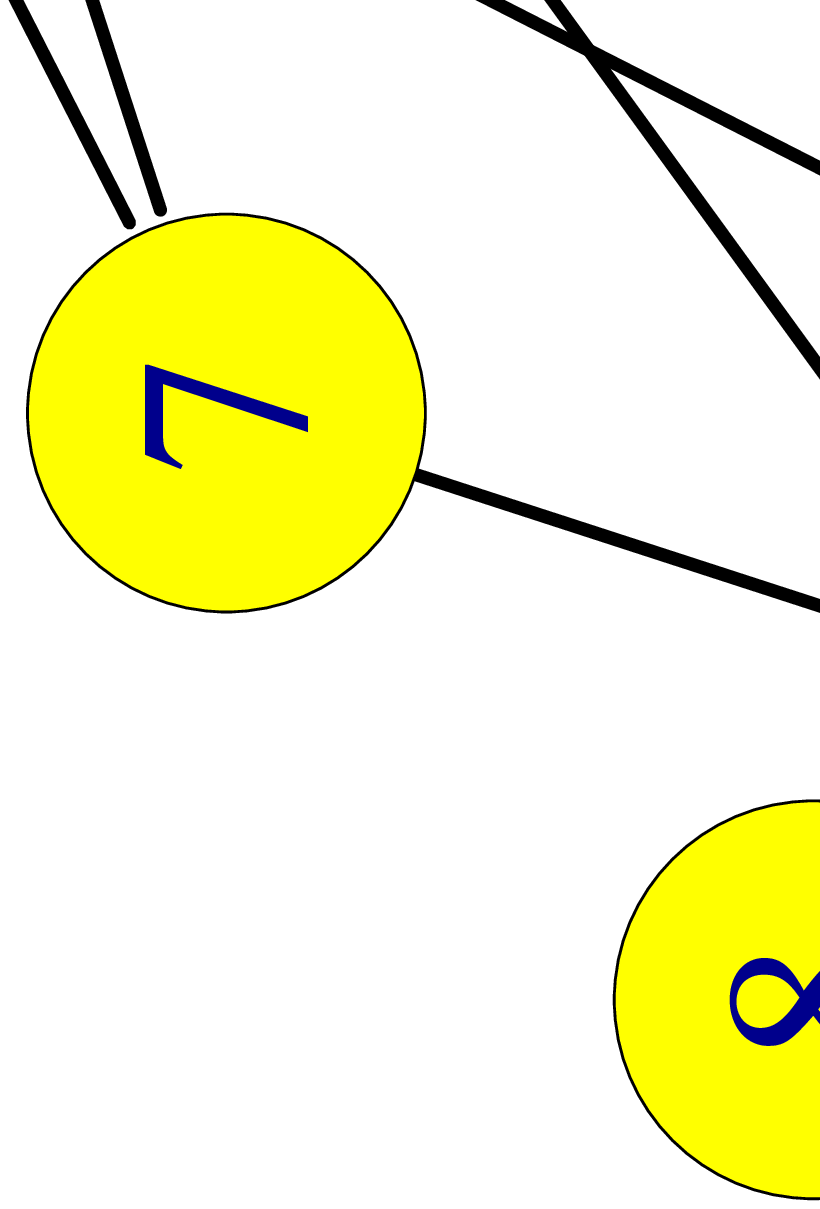} &
\includegraphics[angle=270,totalheight=.4\textheight,width=.32\textwidth]{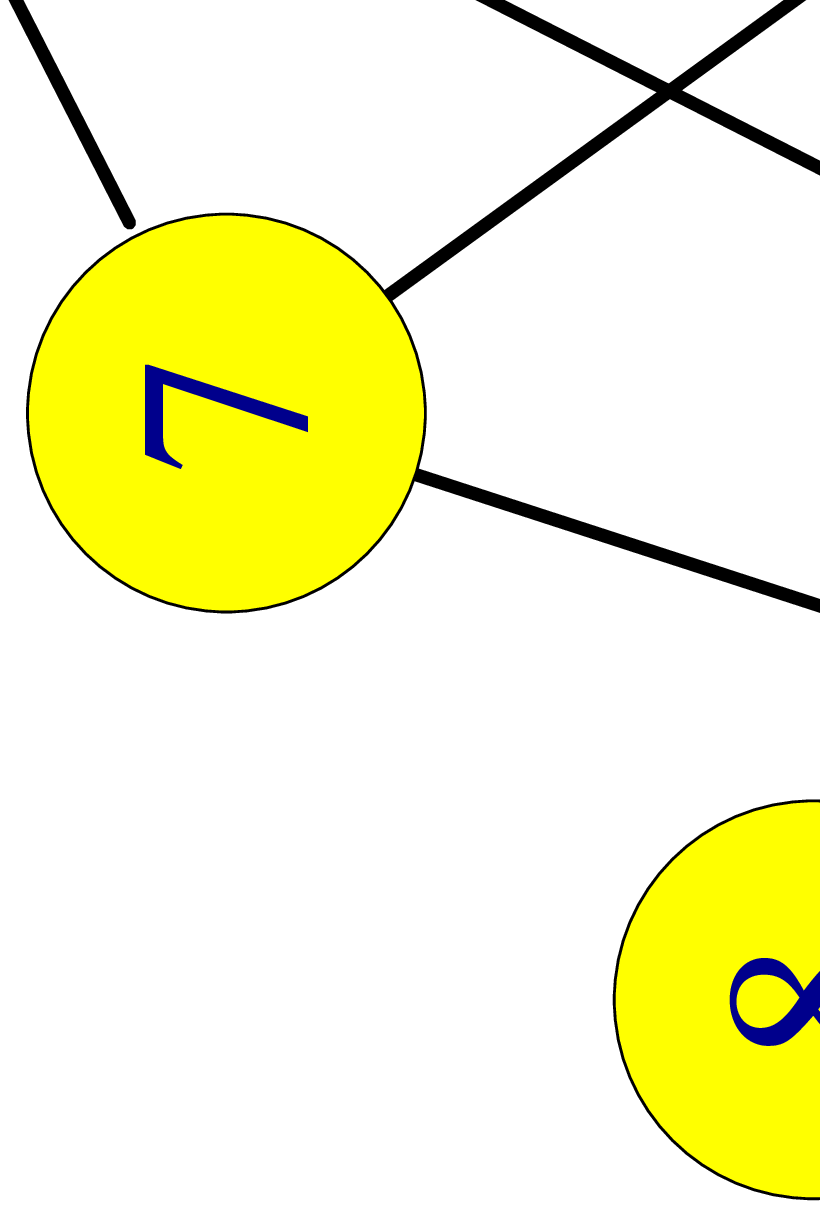}  \\
\mbox{\small{\% non-zeros 90.4}} & \mbox{ \small{\% non-zeros 95.0}} &  \mbox{\small{\% non-zeros 97.8}} \\
\includegraphics[angle=270,totalheight=.4\textheight,width=.32\textwidth]{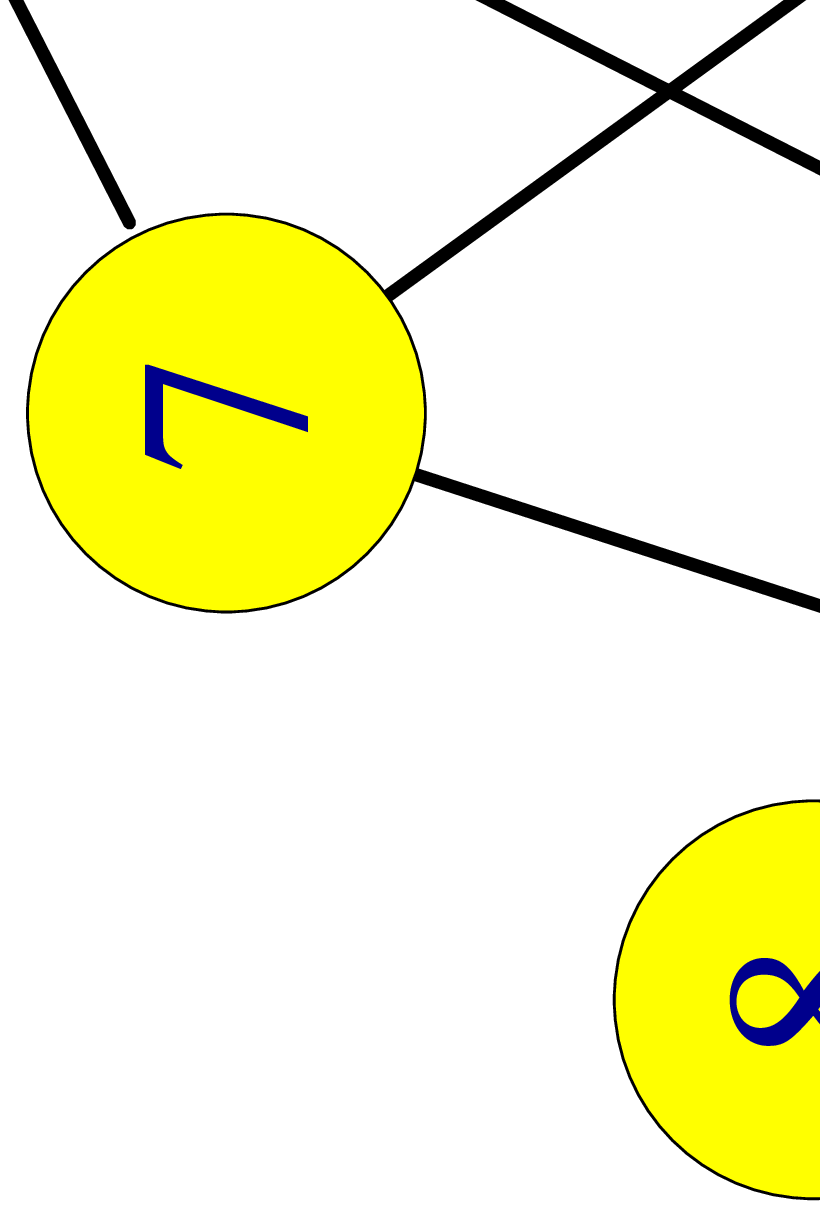} &
\includegraphics[angle=270,totalheight=.4\textheight,width=.32\textwidth]{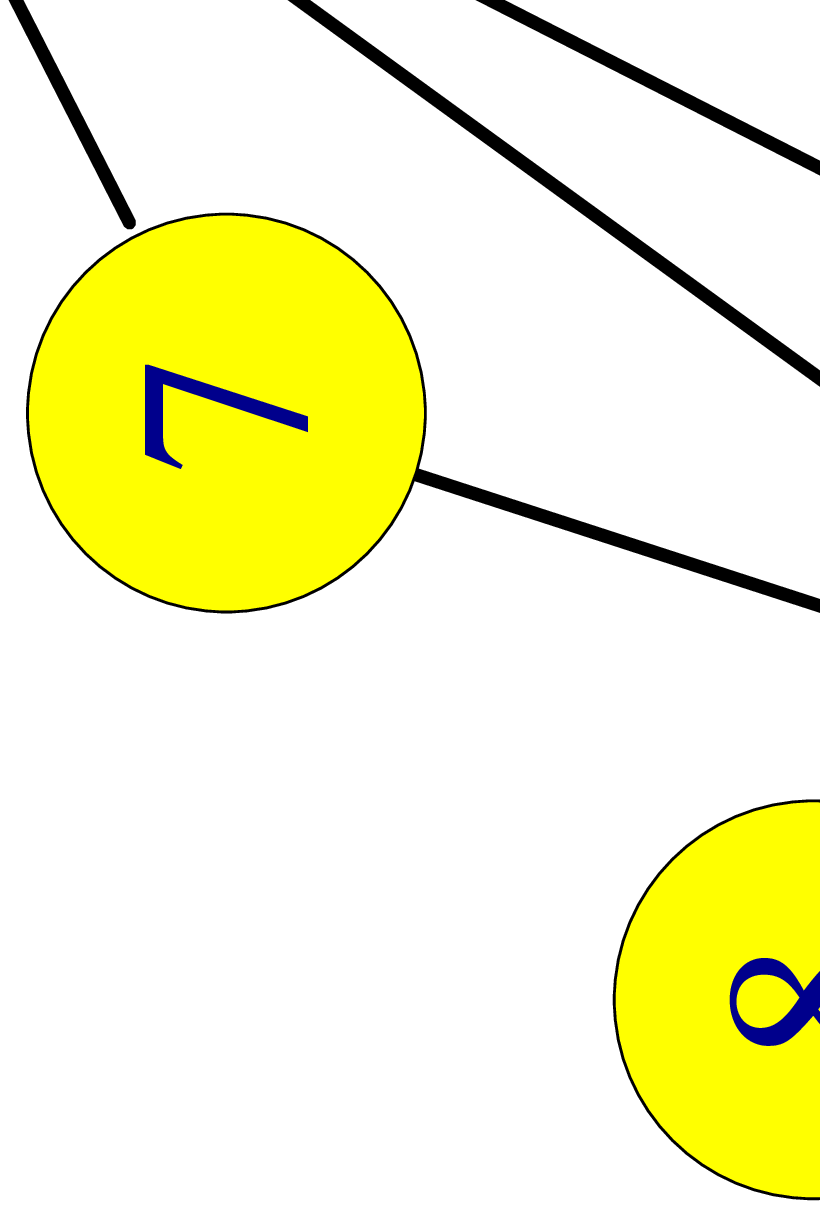} &
\includegraphics[angle=270,totalheight=.4\textheight,width=.32\textwidth]{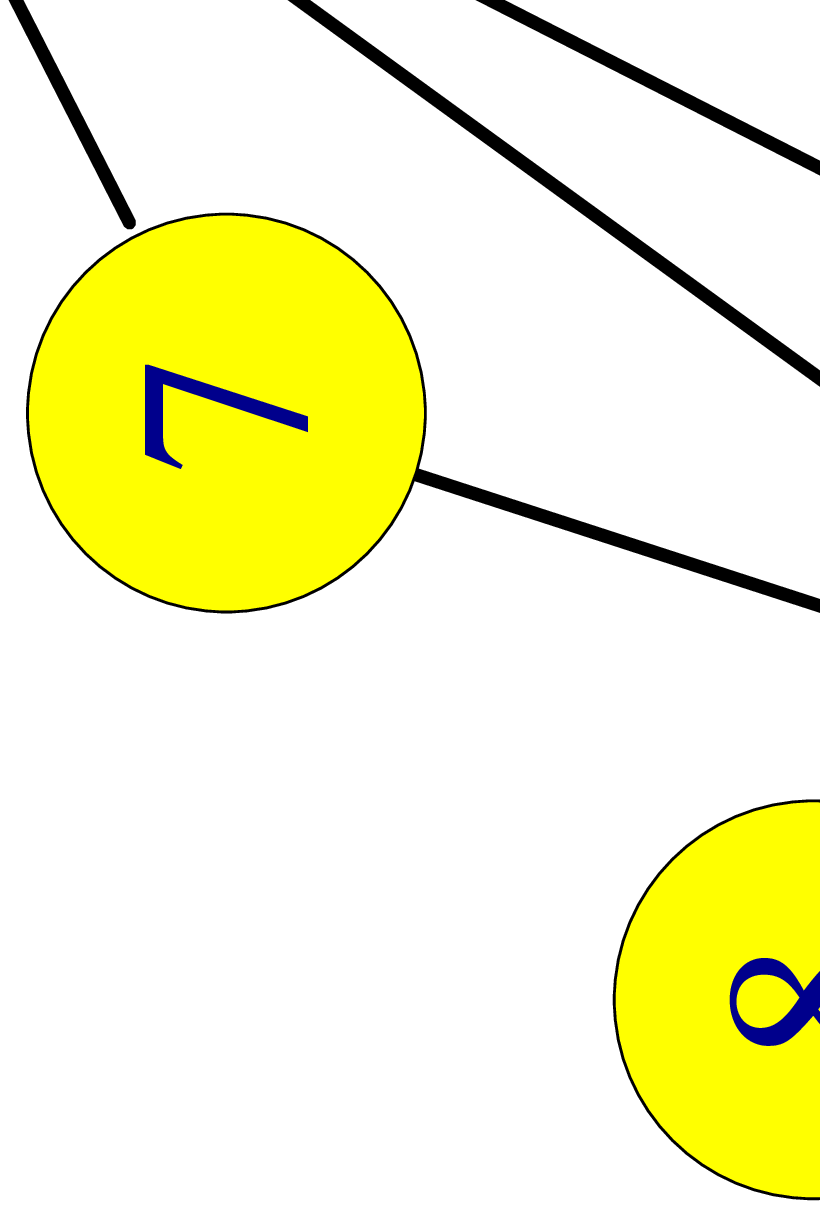} \\
\mbox{\small{\% non-zeros 98.3}} & \mbox{ \small{\% non-zeros 98.8}} &  \mbox{\small{\% non-zeros 98.9}} \\
\end{array}$
\end{center}
 \caption{Pictures of subgraphs of the precision matrices induced by the 20 movies corresponding to the largest absolute partial correlations (averaged across difefrent $\lambda$ values).} \label{fig:subgraphs}
\end{figure}

Another very related application of the set-up described above is the one appearing in~\citet{AZM-11}, where 
one models the raw who-rated-whom binary data using a multivariate random effects model. 
\section{Conclusion and Remarks}
\label{sec:disc}
We propose a flexible, scalable and efficient algorithmic framework 
for large scale $\ell_1$ penalized inverse 
covariance selection problems that is used in several statistical applications. 
The framework gives rise to our main proposal \pine,
its close cousin \pgr and \pex --- all of them operate on the primal version of the 
problem (\ref{crit1}).
The key ingredient to scalability and efficiency requires a novel idea --- that of inexact-minimization 
over an \emph{exact}-one in the row/column blocks.  
The non-smoothness in the objective, positive definiteness of the precision matrices and the overlapping entries of the rows/columns 
necessitates a separate convergence analysis. We address this issue.
This observation immediately brings down the per-iteration complexity of the algorithm by an order of magnitude, from $O(p^3)$ to $O(p^2)$.
On problems of size $p=1-3$ K, our proposal 
performs extremely well when compared to state of the art methods designed for problem (\ref{crit1}). 
Our proposal tracks a sparse, positive definite precision matrix and its exact inverse i.e the covariance matrix at every iteration and is suited to return a solution with low/moderate accuracy depending upon the application task at hand. In particular, this makes it particularly suitable for large scale covariance selection problems where a very high accuracy solution is not practically feasible. 
\pine is particularly suitable for computing a path of solutions on a grid of $\lambda$ values using warm-starts --- and it performs better when compared to existing path algorithms. 

Our algorithm is supported by complexity analysis which shows that it is favorable over existing algorithms by an order of magnitude.
Our proposals are well supported by numerical experiments on real and synthetic data.

\paragraph{Exact Thresholding of Covariance Matrices}
Fairly recently \citet{MH-GL-11} proposed an exact
thresholding strategy which becomes useful if the non-zeros of the
solution $\B\Theta^*_{\lambda}$ to (\ref{crit1}) breaks down into
connected components. The \emph{same} components can be recovered
by looking at the non-zeros of the matrix $\B{\eta}_\lambda(\s)$,
where $\eta(\cdot)=\sgn(\cdot)(|\cdot| - \lambda)_+$ is the
component-wise soft-thresholding operator at $\lambda$.
As shown in \citet{MH-GL-11}, this strategy can be used as a 
wrapper around any algorithm for solving (\ref{crit1}) for
sufficiently large $\lambda$ so that it admits a decompostion into connected components.
Since the aforementioned strategy heavily relies on having a scalable algorithm for (\ref{crit1}), determined by the size of the maximal component --- our proposal opens the possibility of solving problems (\ref{crit1}) for an 
even wider range of $\lambda$-values. 

\paragraph{Extensions to other convex regularizers} Though we were concerned with (\ref{crit1}) in this paper, our framework can accommodate other variants of block-separable regularizers, in place of the $\ell_1$ norm on the entries of the matrix.
This includes:
\begin{enumerate}
\item The weighted $\ell_1$ norm  i.e. $ \sum_{ij} p_{ij}|\theta_{ij}|$, where $p_{ij} \geq 0, \forall i,j$ are given scalars. 
See \citet{FHT2007a,Fan09networkexploration} for use of this penalty.
\item The group lasso /node-sparse \citep{FHT-GL-10} norm on the blocks of the precision matrix:
$\sum_{i=1}^{p} \sqrt{\sum_{j \neq i}  \theta_{ij}^2}$.
\item The elastic net regularization i.e. 
$\alpha \sum_{ij} |\theta_{ij}| + (1- \alpha) \sum_{ij} \theta^2_{ij}$
\citep{ZH2005}
\end{enumerate}
These all are achieved by modifying Algorithm~\ref{algo:inner-cd}, with an inexact minimization strategy for the above penalties.


\commentout{
We did not use the screening rule for any of the algorithms used in
this paper. Moreover we compared different algorithms across a wide
range of $\lambda$ values from sparse to dense solutions (where the
screening effect is not likely to have much of an effect).  For the
real-life movie-lens problem with p=3207, we found that, on the grid
of $\lambda$ values chosen, the size of the maximal connected
component was always larger than 3100. The size soon became 3207 in
the interior of the grid of $\lambda$ values --- so it is not likely
to leverage extra mileage in the computational performance of the
algorithms.
}

\section{Acknowledgements}
We would like to thank Trevor Hastie (Stanford University) and Liang Zhang (Yahoo! Labs) for helpful discussions. Many thanks to Liang for setting up the Movie-Lens data-set for our experiments. Rahul Mazumder would like to thank Yahoo ! Research for hospitality during the summer of 2010, during which this work started.

\newpage

\begin{appendix}
\section{  \uppercase{ {\bf Appendix }} : Proofs and Technical Details}\label{append-sec}
This Section accumulates the technical details and proof details that were outlined in the text.
\subsection{Theorem \ref{thm:conv} : Set up and Proof}\label{proof-conv}
Firstly we will like to point out some important points about the convergence result which also sheds important light on the 
properties of the solution (\ref{crit1}).
If $\lambda>0$, the sequence of objective values and the estimates are bounded below (see Lemma \ref{lem-covsel}). Then
by standard results in real-analysis, the sequence of objective values converge to $g_\infty$ (say). It is not clear however 
(see \citet{Tseng01}, and references therein for counter-examples) 
that the point of convergence i.e. $g_\infty$ corresponds to the minimum of the problem (\ref{crit1}). 
Fortunately however, we show in this section that $g_\infty$ actually is the optimum of the minimization problem (\ref{crit1}). 

Observe that the convex optimization problem (\ref{crit1}), for $\lambda =0$ and $\s$ rank-deficient 
will have its infimum at $-\infty$. 
However, it turns out that for $\lambda >0$, this condition is avoided and the optimal value of the problem is finite. 

As is the case for many convex optimization problems \citep[see for example]{BV2004}, its is not necessary that 
problem  (\ref{crit1}) will have a unique minimizer. It turns out, however, that as soon as $\lambda>0$, problem (\ref{crit1})
has a unique minimizer. 
The assertions made above are consequences of Lemma \ref{lem-covsel}.

We need to set up a formal framework and present a few lemmas leading to the proof.

\paragraph{\textbf{The $\ell_1$ Regularized Proximal Map}} \label{sec:app:prox}
A variant of Step \ref{item-12}, in Algorithm \ref{algo:inner-cd} is one where we use a proximal step\citep{nest-07}, instead of one sweep of cyclical block-coordinate descent. Recall that the function $g_p(\B{\theta}_{1p})$ (\ref{margin-diag-2}) 
is in the composite form \citep{nest-07} :
\begin{equation}\label{compo}
g_p(\B{\theta}_{1p}) = f_p(\widetilde{\B{\theta}}_{1p}) + 2\lambda\|\B{\theta}_{1p}\|_1
\end{equation}
where $f_p(\cdot)$ denotes the smooth part given by:
$$f_p(\widetilde{\B{\theta}}_{1p})=\B{\theta}_{1p}'\{(s_{pp}+ \lambda)\B{\Theta}_{11}^{-1}\}\B{\theta}_{1p} + 2 \M{s}_{1p}'\B{\theta}_{1p}$$
It is easy to see that the gradient $\nabla f_p(\cdot)$ of the function $f_p(\cdot)$ is Lipschitz continuous i.e. :
\begin{equation}\label{def:lip-lip}
\|\nabla f_p(\M{u}) - \nabla f_p(\M{v})\|_2 \leq L_p \|\M{u} - \M{v}\|_2
\end{equation} 
and an estimate of $L_p=2(s_{pp} + \lambda)\|\B{\Theta}_{11}^{-1}\|_2$.
The proximal step or the generalized gradient step (in place of the coordinate-wise updates (\ref{inner-cd-update1}) ) is given by the following:
\begin{eqnarray}
\widehat{\B\omega} &=& \argmin_{\omega \in \Re^{p-1}}\; \{\frac{L_p}{2} \| \omega - (\widetilde{\B{\theta}}_{1p} - 
\frac{1}{L_p} \nabla f_p(\widetilde{\B{\theta}}_{1p}) \|_2^2 + 2\lambda\|\omega\|_1 \} - \widetilde{\B{\theta}}_{1p} \\
&=&\eta(\widetilde{\B{\theta}}_{1p} - \frac{1}{L_p} \nabla f_p(\widetilde{\B{\theta}}_{1p}) ; \frac{2\lambda}{L_p} ) - \widetilde{\B{\theta}}_{1p}
\label{block-proximal}
\end{eqnarray}
where 
$\nabla f_p(\widetilde{\B{\theta}}_{1p})=2(s_{pp}+ \lambda)(\widetilde{\B{\Theta}}_{11})^{-1}\widetilde{\B{\theta}}_{1p} + 2 \M{s}_{1p}$
and $\eta(\cdot; \gamma) = \sgn(\cdot)(|\cdot| -\gamma)_+$ is the soft-thresholding operator applied component-wise to a vector
$\cdot \in \Re^{p-1}$.

In what follows, we will study a minor variation in  
Step 2 of Algorithm~\ref{algo:inner-cd}. Instead of using one sweep of cyclical coordinate descent, we will use one proximal step as described in (\ref{block-proximal}). The convergence result with the cyclical coordinate-descent version is no different but simply adds to the technicality of the analysis. 

\paragraph{  Properties of the soft-thresholding operator}
Before going into the proof we need a lemma pertaining to an important property of the soft-thresholding operator i.e. 
the $\ell_1$ Regularized Proximal Map.

For a function $h:\Re^q \mapsto \Re$ with  Lipschitz continuous gradient:
\begin{equation}\label{Lipsh-cont}
\|\nabla h(\mathbf{x}) - \nabla h(\mathbf{y}) \| \leq L\| \mathbf{x}- \mathbf{y}\|  
\end{equation}
the following majorization property holds \citep[See for example, Lemma 2.1]{fista-09} 
\begin{equation}\label{major-1}
\frac{L}{2}\|\mathbf{w} -\mathbf{x}\|_2^2 +  \langle \nabla h(\mathbf{x}), \mathbf{w}- \mathbf{x}\rangle 
+ h(\mathbf{x}) \geq h(\mathbf{w})
\end{equation}
The minimizer wrt $\M{w}$ for the $\ell_1$ regularized problem:
\begin{equation}\label{major-def1}
 \mathrm{Maj}(\M{w};\M{x}) :=  \frac{L}{2}\|\mathbf{w}-\mathbf{x}\|_2^2 + \langle \nabla h(\mathbf{x}), \mathbf{w}- \mathbf{x}\rangle + h(\mathbf{x}) + \lambda \|\mathbf{w}\|_1 
\end{equation}
is given by the proximal map or the soft-thresholding operator:
\begin{align}\label{gen-thresh}
M(\mathbf{x}):= &\argmin_{\mathbf{w}} \;\; \{\;\; \frac{L}{2}\|\mathbf{w}-\mathbf{x}\|_2^2 +  
\langle \nabla h(\mathbf{x}), \mathbf{w}- \mathbf{x}\rangle + \lambda \|\mathbf{w}\|_1 \} \\
  = &\argmin_{\mathbf{w}} \;\;  \{ \;\; \frac{L}{2}\|\M{w}- (\mathbf{x}- \frac{1}{L}\nabla h(\mathbf{x}))\|_2^2 +  \lambda \|\mathbf{w}\|_1 \}  \nonumber\\
  =& \eta \left( (\mathbf{x}- \frac{1}{L}\nabla h(\mathbf{x})); \frac{\lambda}{L} \right )\nonumber
\end{align}
The following Lemma states an important property of the map $M(\mathbf{x})$. 
\begin{lemma}\label{lem:suff-decrease}
Consider the function  $H(\cdot)$ defined by:
\begin{equation}\label{compo-thm}
H(\mathbf{w}) = h(\M{w}) + \lambda \|\M{w}\|_1
\end{equation} 
with $h(\cdot)$ having the property in (\ref{Lipsh-cont}). 
For any $\mathbf{x} \in \Re^q$ and $M(\cdot)$ as defined in (\ref{gen-thresh}) the following holds:
\begin{equation}\label{suff-decrease-1}
\frac{2}{L}\cdot \big ( H(\M{x}) - H(M(\M{x})) \big)  \geq \| \M{x} - M(\M{x}) \|_2^2
\end{equation}
\end{lemma}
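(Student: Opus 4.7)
The plan is to combine the descent lemma for $h$ with the strong convexity of the surrogate $\mathrm{Maj}(\cdot;\mathbf{x})$. Write $\mathbf{x}^+ := M(\mathbf{x})$ to lighten notation. The proof naturally splits into two one-line inequalities that, when chained, give exactly the desired bound.

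First, I would apply the quadratic upper bound (\ref{major-1}) at the points $\mathbf{w} = \mathbf{x}^+$ and base point $\mathbf{x}$. This yields
\begin{equation*}
h(\mathbf{x}^+) \;\leq\; h(\mathbf{x}) + \langle \nabla h(\mathbf{x}),\, \mathbf{x}^+ - \mathbf{x}\rangle + \frac{L}{2}\|\mathbf{x}^+ - \mathbf{x}\|_2^2.
\end{equation*}
Adding $\lambda\|\mathbf{x}^+\|_1$ to both sides and recalling the definition (\ref{major-def1}) of $\mathrm{Maj}$, the right-hand side is exactly $\mathrm{Maj}(\mathbf{x}^+;\mathbf{x})$, so $H(\mathbf{x}^+) \leq \mathrm{Maj}(\mathbf{x}^+;\mathbf{x})$.

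Second, I would exploit the fact that $\mathrm{Maj}(\cdot;\mathbf{x})$, being the sum of the $L$-strongly-convex quadratic $\tfrac{L}{2}\|\mathbf{w}-\mathbf{x}\|_2^2$, a linear term, a constant, and the convex function $\lambda\|\mathbf{w}\|_1$, is itself $L$-strongly convex in $\mathbf{w}$. Since $\mathbf{x}^+$ is the (unique) minimizer of $\mathrm{Maj}(\cdot;\mathbf{x})$, standard strong-convexity gives
\begin{equation*}
\mathrm{Maj}(\mathbf{y};\mathbf{x}) \;\geq\; \mathrm{Maj}(\mathbf{x}^+;\mathbf{x}) + \frac{L}{2}\|\mathbf{y}-\mathbf{x}^+\|_2^2 \qquad \text{for all } \mathbf{y}.
\end{equation*}
Specializing to $\mathbf{y}=\mathbf{x}$ and noting that $\mathrm{Maj}(\mathbf{x};\mathbf{x}) = h(\mathbf{x}) + \lambda\|\mathbf{x}\|_1 = H(\mathbf{x})$ yields
\begin{equation*}
H(\mathbf{x}) \;\geq\; \mathrm{Maj}(\mathbf{x}^+;\mathbf{x}) + \frac{L}{2}\|\mathbf{x}-\mathbf{x}^+\|_2^2.
\end{equation*}

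Finally, chaining the two inequalities gives $H(\mathbf{x}) \geq H(\mathbf{x}^+) + \frac{L}{2}\|\mathbf{x}-\mathbf{x}^+\|_2^2$, which upon rearrangement is precisely (\ref{suff-decrease-1}). The only subtlety in this argument is the second step: one could alternatively invoke a first-order optimality condition for $\mathbf{x}^+$ together with the subgradient inequality for $\lambda\|\cdot\|_1$, but appealing directly to strong convexity of $\mathrm{Maj}(\cdot;\mathbf{x})$ is cleanest and avoids handling subgradients explicitly. No real obstacle is anticipated here, since the result is a standard sufficient-decrease property of the proximal map; the only care needed is to recognize that strong convexity of $\mathrm{Maj}$ comes entirely from its quadratic part, so the nonsmooth $\ell_1$ piece passes through without extra work.
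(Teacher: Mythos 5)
Your proof is correct and rests on the same two ingredients the paper uses, namely the majorization property (\ref{major-1}) and the optimality of $M(\mathbf{x})$ for the surrogate $\mathrm{Maj}(\cdot;\mathbf{x})$; the paper simply outsources this to Lemma~2.3 of \citet{fista-09} and specializes $\mathbf{y}=\mathbf{x}$, whereas you derive the specialized inequality directly via the $L$-strong convexity of $\mathrm{Maj}(\cdot;\mathbf{x})$ at its minimizer. The argument is complete and self-contained, so nothing further is needed.
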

\begin{proof}
It can be shown using elementary convex analysis and the properties of the map $\mathrm{Maj}(\cdot)$ (\ref{major-def1}) defined above, 
\citep[See Lemma 2.3]{fista-09}:
\begin{equation}\label{major-fista}
H(\M{x}) - H(M(\M{y})) \geq \frac{L}{2}\|M(\M{y}) - \M{y}\|_2^2 + L \langle \M{y} - \M{x}, M(\M{y}) - \M{y}\rangle
\end{equation}
Substituting $\M{y} = \M{x}$ above we get the desired result in (\ref{suff-decrease-1}).
\end{proof}

\textbf{ Proof of Theorem \ref{thm:conv}, part (a)}
The monotonicity follows by construction of the sequence of iterates $\B\Theta_k$. 

The iterate $\B{\Theta}_{k}$ is obtained by updating all the $p$ rows/columns of the matrix  $\B\Theta$, cyclically.
We now introduce some notation.
Let us denote the successive row/column updates by:
\begin{eqnarray} \label{seq-defn}
\text{Update row/column 1}& \rightarrow  \B{\Theta}_{k,1} \\
\text{Update row/column 2}&  \rightarrow \B{\Theta}_{k,2}  \nonumber \\
\ldots\ldots\ldots  &\ldots \nonumber \\
\text{Update row/column p}& \rightarrow \B{\Theta}_{k,p} \nonumber
\end{eqnarray}
Further we use $\B{\Theta}_{k,i}[-i,i] \in \Re^{p-1}$ to denote the $i^{\mathrm{th}}$
column of the matrix $\B{\Theta}_{k,i}$ (excluding the diagonal entry).
We need to estimate the difference in $\B{\Theta}_{k,i-1}$ and $\B{\Theta}_{k,i}$ --- note that they differ only in the 
$i^{\mathrm{th}}$ row/column.

To settle ideas and using the same set-up as in Section \ref{our-method}, we concentrate on row/column $p$. 
The difference   
$\B{\Theta}_{k,p}[-p,p] - \B{\Theta}_{k,p-1}[-p,p]$ can be estimated by using Lemma~\ref{lem:suff-decrease}.
To see this recall the framework of Algorithm~\ref{algo:block-gl} as described in Section~\ref{our-method}.
To update the $p^{\mathrm{th}}$ row/column we need to consider a proximal-gradient step in the function $g_p(\cdot)$
as described in~(\ref{compo}). This function exactly fits into the framework of  
Lemma~\ref{lem:suff-decrease}, for specific choices of $L$, $h(\cdot)$, $H(\cdot)$ and $\lambda$.
Let the Lipshcitz constant at this iterate be denoted by $L_{k,p}$.  
Using the equality 
$g(\B{\Theta}_{k,p-1}) - g(\B{\Theta}_{k,p}) = g_p(\B{\Theta}_{k,p-1}[-p,p]) - g_p(\B{\Theta}_{k,p}[-p,p])$ (which follows by construction) 
and Lemma~\ref{lem:suff-decrease} we have:
\begin{equation}\label{estimate-diff-off}
\frac{2}{L_{k,p}} (g(\B{\Theta}_{k,p-1}) - g(\B{\Theta}_{k,p})  ) \geq \|\B{\Theta}_{k,p}[-p,p] - \B{\Theta}_{k,p-1}[-p,p]\|_2^2
\end{equation}
Recall that we established in Section~\ref{sec:prop:pd}, that 
Agorithm~\ref{algo:block-gl} produces a sequence of estimates $\B{\Theta}_{k,i}$ such that 
$\B{\Theta}_{k,i} \succ 0$. Further note that the minimum 
of~(\ref{crit1}) is finite
(as $\lambda >0$, Lemma \ref{lem-covsel}). It follows that 
there exists $\rho' > \rho>0$ such that  
\begin{equation}\label{eq:unif-bddness}
\rho' I_{p \times p} \succ \B{\Theta}_{k,i} \succ \rho I_{p \times p}, \forall k
\end{equation}
where $I_{p \times p}$ is a $p$ dimensional identity matrix. 
Since $L_{k,i}$ is a scalar multiple~(\ref{def:lip-lip}) of the spectral norm $\|(\B\Theta_{k,i}[-i,-i])^{-1}\|_2$, 
it follows from~(\ref{eq:unif-bddness}) that $\inf_{k,i} L_{k,i} >0$ and $\infty > \sup_{k,i} L_{k,i}$. 

Thus using the monotonicity of the sequence of objective values $g(\B\Theta_{k,i})$ 
for $i=1,\ldots, p$, $k \geq 1$, the 
fact that the minimum value of (\ref{crit1}) is finite and the boundedness of $\frac{1}{L_{k,i}}$, we see that 
the left hand side of (\ref{estimate-diff-off}) converges to zero as $k \rightarrow \infty$. 
This implies that $\B{\Theta}_{k,p}[-p,p] - \B{\Theta}_{k,p-1}[-p,p] \rightarrow 0$ as $k \rightarrow \infty$ i.e.
the successive difference of the off-diagonal entries converge to zero as $k \rightarrow \infty$. 
In particular, we have this to be true for every row/ column $i \in \{1,\ldots, p\}$ i.e. 
$$ \B{\Theta}_{k,i}[-i,i] - \B{\Theta}_{k,i-1}[-(i-1),i-1] \rightarrow 0, \;\;\; k \rightarrow \infty\;\;  \text{for every $i \in \{1,\ldots, p\}$}$$
In the above, we use the convention $\B{\Theta}_{k,0}[-1,1] = \B{\Theta}_{k-1,p}[-p,p]$.

Since $\{\B{\Theta}_k\}_{k}$ is a bounded sequence it has a limit point  --- let  $\B{\Theta}_\infty$ be a limit point.
Moving along a sub-sequence (if necessary), $n_k \subset \{1,2, \ldots\}$ we have $\B{\Theta}_{n_k} \rightarrow \B{\Theta}_\infty$.

Using the stationary condition for the update described in (\ref{block-proximal}), the $p^{\mathrm{th}}$ row/column (off-diagonal entries) satisfies: 
$$(s_{pp} + \lambda) (\B\Theta_\infty)^{-1}_{11} (\B{\Theta}_{\infty,p}[-p,p]) + \M{s}_{1p} 
+ \lambda \sgn( \B{\Theta}_{\infty,p}[-p,p] ) = 0. $$
The above holds true for every row/column $i \in \{1,\ldots, p\}$. 
Using the above stationary condition along with the update relation for the diagonal entries 
as in Step \ref{item-13} in Algorithm \ref{algo:inner-cd}, it is easy to see that the limit point $\B{\Theta}_\infty$ satisfies the global stationary condition for problem~(\ref{crit1}) i.e. 
$$ - (\B\Theta_\infty)^{-1} + \s + \lambda \sgn(\B\Theta_\infty) = \M{0}$$

Thus we have established that $g(\B\Theta_k)$ converges to the global minimum of the function $g(\cdot)$.

\textbf{Proof of Theorem \ref{thm:conv}, part (b)}
For this part it suffices to show that there is a unqiue limit point for the sequence $\B\Theta_{k}$. 
Note that we showed in Part (a) that every limit point of the sequence $\B\Theta_{k}$ is a minimizer for the problem (\ref{crit1}).
Now by Lemma \ref{lem-covsel}, there is a unique minimizer of (\ref{crit1}). This implies that  $\B\Theta_{k}$ has a unique 
limit point and hence the sequence converges to  $\B\Theta_{\infty}$, the minimizer of $g(\cdot)$.

\section{Complexity analysis details of \pine,\pex and  \pgr}\label{comp-complex-append}
\subsection{Complexity of \pine}\label{sec:pine}
Step 2 of Algorithm~\ref{algo:block-gl} requires computing $\B{\Theta}_{11}$, this requires 
$O((p-1)^2)$ (see Section \ref{sec:props}). Algorithm~\ref{algo:inner-cd} does one sweep of cyclical coordinate descent --- this has worst case complexity $O((p-1)^2)$ --- in case the solution to the $\ell_1$ regularized QP is sparse, the cost is much smaller. 
It should be noted here that any constant number of cycles (say $T_o$) of cyclical coordinate descent will lead to a complexity of
$O(T_o\cdot(p-1)^2)$. As long as $T_o \ll p$ (say $T_o =1/2$) this leads to $O(p^2)$. 
This is followed by updating the covariance matrix with $O((p-1)^2)$, via rank-one updates. Hence Step 2, for each row / column has a complexity of $O(p^2)$, for $p$ rows/columns this is $O(p^3)$. If $\kappa $ denotes the total number of full sweeps across all the rows and columns this leads to $O(\kappa p^3)$. In practice based on our experiments 
we found $\kappa = 2-10$ is sufficient for convergence till a fairly high tolerance.
While computing a path of solutions with warm-starts $\kappa$ is around 2-4 for different values of $\lambda$. The value of $\kappa$ increases when $\lambda$ is very small so that the resultant solution $\B\Theta^*$ is dense --- but since these $\lambda$ values correspond to almost un-regularized likelihood solutions, in most applications they are not statistically interesting solutions.

\subsection{Complexity of \pex}\label{sec:pex}
In case of using \pex the analysis is quite similar to above but there are subtle differences. 
The complexity of matrix rank-updates remain the same $O(p^2)$, what changes is the number of coordinate sweeps required for
Algorithm \ref{algo:inner-cd} to solve the inner $\ell_1$ regularized block QP till high accuracy. This problem is fairly challenging in its own right and is computationally hard when $p$ is a few thousand. 
Precise convergence rates of coordinate descent to the best of our knowledge are not known. This depends largely upon the data type being used. Often the number of coordinate sweeps i.e. $k$ can be $O(p)$ --- leading to a complexity of $O(p^3)$.
If (generalized) gradient descent methods \citep{nest-07} are used instead of cyclical coordinate descent --- then the number of iterations
$k$ is of the order of $O(1/\epsilon)$, where $\epsilon >0$ is the accuracy of the solution. For $\epsilon \approx 1/p$, 
$k \approx p$. 
Thus \pex has roughly a complexity of $O(p^3)$ for every row/column update --- leading to an overall cost of $O(p^4)$ for one full sweep across all rows/columns. If there are $\kappa'$ full sweeps across all rows/columns the total cost is $O(\kappa' p^4)$. 

In case $\lambda$ is  large enough so that every $\ell_1$ regularized QP can be solved quite fast i.e. $O(p^2)$ --- the total cost of
\pex reduces to $O(p^3)$.

\subsection{Complexity of \pgr}  \label{sec:pgr}
The main difference of \pgr lies in the manner in which it updates the rows/columns via appending rows/columns in Steps 1-3 of Algorithm \ref{algo:grow}. Steps 1-3 have a cost of $\sum^{p}_{m=1} m^2$, which is approximately $p^3/3$. When compared with one sweep of 
\pine, the growing step is approximately one-third of the cost of \pine. One sweep of the growing strategy leads to inferior performance when compared to one sweep of \pine. However, after a smaller number of sweeps, \pgr can obtain better likelihoods than \pine.
In some examples, as seen in the experimental 
section, \pgr is faster than \pine in obtaining a solution with low accuracy. 


\newpage

\section{\uppercase{ \textbf{ Supplementary Materials } } }\label{sec:suppl}
This portion gathers some of the technicalities avoided in the main text of the article and the Appendix \ref{append-sec}.  
\subsection{Properties of the updates of Algorithm~\ref{algo:block-gl}} \label{app:sec:prop}
We present here a detailed derivation of the properties of the updates of Algorithm~\ref{algo:block-gl}.
\subsubsection{Positive-definiteness}\label{sec:pos-def}
The updates described in Steps (\ref{item-11}), (\ref{item-12}), (\ref{item-13}) in Algorithm \ref{algo:inner-cd}
actually preserve positive definiteness of 
$\widehat{\B\Theta}$, under the assumption that $\widetilde{\B{\Theta}} \succ \mathbf{0}$.
Using the decomposition for $\widetilde{\B\Theta}$ in (\ref{break-x}), it follows from standard properties of positive definiteness of block partitioned matrices \citep{BV2004} that:
\begin{equation}\label{pos-def-1}
\widetilde{\B{\Theta}} \succ 0  \;\;\;\;\;  \text{iff} \;\;\;\;\; \widetilde{\B{\Theta}}_{11} \succ 0 ,\;\; 
\widetilde{\theta}_{pp} - \widetilde{\B{\theta}}_{1p}'\widetilde{\B{\Theta}}^{-1}_{11}\widetilde{\B{\theta}}_{1p} > 0
\end{equation}
Observe that $\widehat{\B{\Theta}}_{11} =\widetilde{\B{\Theta}}_{11}$, by construction.  
Further by the property of the update Step \ref{item-13} (Algorithm~\ref{algo:block-gl})   we see that 
$$\widehat{\theta}_{pp} - \widehat{\B{\theta}}_{1p}'\widetilde{\B{\Theta}}^{-1}_{11}\widehat{\B{\theta}}_{1p} = \frac{1}{s_{pp}+ \lambda} >0.$$
 This shows
by (\ref{pos-def-1}) that $\widehat{\B\Theta} \succ 0$.

A simple consequence of the above observation is that  
$\log\det(\widehat{\B{\Theta}})$ is finite if $\log\det(\widetilde{\B{\Theta}})$ is so.

\subsubsection{Tracking $(\hat{\B{\Theta}}, (\hat{\B{\Theta}})^{-1})$}\label{sec:track}
For updating the $p^{\mathrm{th}}$ row/column, \pine  requires knowledge of 
$(\widetilde{\B\Theta}_{11})^{-1}$. 
Of course, it is not desirable to compute the inverse from scratch
for every row/column $i$, with a complexity of $O(p^3)$. 
Assume that, before operating on the $p^{\mathrm{th}}$ row/column we already have with us the tuple
$(\widetilde{\B\Theta},(\widetilde{\B\Theta})^{-1})$ --- then it is fairly easy to compute 
$(\widetilde{\B\Theta}_{11})^{-1}$ via:
\begin{equation}\label{update-rank-one-1}
(\widetilde{\B\Theta}_{11})^{-1} = \widetilde{\M{W}}_{11} - \widetilde{\M{w}}_{1p}\widetilde{\M{w}}_{p1}/\widetilde{w}_{pp}, 
\end{equation}
where $\widetilde{\M{W}}:=(\widetilde{\B\Theta})^{-1}$ and the blocks
$\widetilde{\M{W}}_{11},\widetilde{\M{w}}_{1p},\widetilde{\M{w}}_{p1}, \widetilde{w}_{pp}$ of the matrix 
$\widetilde{\M{W}}$, have the same structure as in (\ref{break-x}). This follows by 
standard-formulae of inverses of block-partitioned matrices --- and the update requires $O(p^2)$.

Once the $p^{\mathrm{th}}$ row/column of the matrix $\widetilde{\B\Theta}$ is updated, we obtain $\widehat{\B{\Theta}}$. The matrix 
$\widehat{\M{W}}:=(\widehat{\B{\Theta}})^{-1}$ is obtained via:
\begin{equation}\label{update-rank-one-2}
\widehat{\M{W}}_{11} = (\widetilde{\B{\Theta}}_{11})^{-1} - 
\frac{(\widetilde{\B{\Theta}}_{11})^{-1}\widehat{\B{\theta}}_{1p}\widehat{\B{\theta}}_{p1}(\widetilde{\B{\Theta}}_{11})^{-1}}
{(\widehat{\theta}_{pp} - \widehat{\B{\theta}}_{p1} (\widetilde{\B{\Theta}}_{11})^{-1}\widehat{\B{\theta}}_{1p})} ; \;\;  
\widehat{\M{w}}_{1p} = -\frac{(\widetilde{\B{\Theta}}_{11})^{-1}\widehat{\B{\theta}}_{1p}}{(\widehat{\theta}_{pp} - \widehat{\B{\theta}}_{p1} (\widetilde{\B{\Theta}}_{11})^{-1}\widehat{\B{\theta}}_{1p})}, 
\end{equation}
where as before the blocks of the matrix $\widehat{\M{W}}$, follow the same notation as in (\ref{break-x}).
The cost is again $O(p^2)$.
Note that the diagonal entry $\widehat{w}_{pp} =  1/(\widehat{\theta}_{pp} - \widehat{\B{\theta}}_{p1} (\widetilde{\B{\Theta}}_{11})^{-1}\widehat{\B{\theta}}_{1p})$.

The above recursion shows how to track $(\widehat{\B\Theta}, \widehat{\B\Theta}^{-1})$
 (as well as $(\widetilde{\B\Theta}, \widetilde{\B\Theta}^{-1})$) as one cycles across the rows/columns of the matrix $\B\Theta$.

\subsection{Algorithmic Description of Primal Growth for Graphical Lasso \pgr}\label{sec:grow:algo} 
This elaborates Section~\ref{sec:grow}, in the text. 
Given an initial working dimension $p_0$ (typically $p_0=1$) and estimates of the precision and the covariance matrix
$(\widetilde{\B{\Theta}}_{p_0 \times p_0},(\widetilde{\B{\Theta}}_{p_0 \times p_0})^{-1})$, Algorithm~\ref{algo:grow} describes the task of obtaining the solution to (\ref{crit1}), with $\B\Theta, \s$ having dimensions $p \times p$. 
\begin{algorithm}[tb]
\caption{\pine with Growing Strategy : \pgr} 
\label{algo:grow} 
Inputs: $\lambda$, $\s_{p \times p}$ and  
 $p_0 \times p_0$ matrices $(\widetilde{\B{\Theta}},(\widetilde{\B{\Theta}})^{-1})$, where 
$p_0<p$.

Set initial working row/column $m=p_0+1$.
\begin{enumerate} 
\item[1] 
Consider a  
$m \times m$ dimensional problem of the form (\ref{crit1}) with covariance\footnote{Here $\s_{1:m \times 1:m}$, denotes the 
sub-matrix of $\s$ with row/column indices $1,\ldots,m$.} $\s_{1:m \times 1:m}$
and initializations     
$(\widetilde{\B{\Theta}},(\widetilde{\B{\Theta}})^{-1})$, of dimension $m \times m$ where 
\begin{equation}\label{pad-1}
\widetilde{\B{\Theta}} \leftarrow  \mathrm{blkdiag}(\widetilde{\B{\Theta}},\frac{1}{(s_{m m} + \lambda)}); \;\;\;\;\;
(\widetilde{\B{\Theta}})^{-1} \leftarrow \mathrm{blkdiag}((\widetilde{\B{\Theta}})^{-1}, s_{mm} + \lambda)
\end{equation}
\item[2] Apply Step 2 of Algorithm \ref{algo:block-gl}
with inputs $\s_{1:m \times 1:m}$, $(\widetilde{\B{\Theta}}, (\widetilde{\B{\Theta}})^{-1})$ and input dimension $m$.

The above results in $(\widehat{\B{\Theta}}, (\widehat{\B{\Theta}})^{-1})$ --- both $m$ dimensional matrices.

Assign $(\widetilde{\B{\Theta}}, (\widetilde{\B{\Theta}})^{-1}) \leftarrow (\widehat{\B{\Theta}}, (\widehat{\B{\Theta}})^{-1})$
\item[3] Increase $m=m+1$, and go to Step-1 (if $m \leq p$); else go to Step-4. 
\item[4] Apply Algorithm~\ref{algo:block-gl} with   
$\s_{1:p \times 1:p}$, and initializations $(\widetilde{\B{\Theta}}_{p \times p}, (\widetilde{\B{\Theta}}_{p \times p})^{-1})$, 
till a desired tolerance. The output is the solution to problem (\ref{crit1}).
\end{enumerate} 
\end{algorithm} 

\subsection{Performances of \pine and its variants for low--high accuracy solutions}\label{sec:low-high-accuracy}
We now proceed to show how gracefully they deliver solutions of lower accuracy within a much shorter span of time
making it feasible to scale to very high dimensional problems.
As mentioned earlier, the primal formulation is particularly suited for this task of delivering solutions with lower convergence tolerance,
since it delivers a sparse and positive definite precision matrix and its exact inverse.  
Table \ref{tab:three} shows average timings in seconds across a grid of ten $\lambda$ values with varying degrees of accuracy.
In case the application demands a relatively low accuracy solution, then the algorithms deliver solutions within fractions of the time taken to deliver a solution with higher accuracy.

\begin{table}[htpb!]
  \begin{tabular}{| l | c |  c  | c c c | }     \hline
\multirow{2}{*}{p / N }    & average \% &  Accuracy & \multicolumn{3}{|c|}{Algorithm Times (sec)}   \\
   &                               of  nnz &   (TOL)              & \pgr & \pex & \pine   \\ \hline 
\multirow{3}{*}{$1\times 10^3$ / $2\times 10^3$} &\multirow{3}{*}{61.3} &$10^{-2}$ & 60.16 & 68.29 & \bf 49.0 \\
 && $10^{-3}$ &  80.59 &  104.67 &  \bf 67.31 \\
&& $10^{-4}$  & 112.31 &  134.13 &  \bf 91.91 \\
&&   $10^{-5}$& 140.08 &  174.52 & \bf 120.75 \\ \hline
\multirow{3}{*}{$1\times 10^3$ / $1\times 10^3$} & \multirow{3}{*}{62.77} & $10^{-2}$  & 68.46&   92.32 & \bf 64.56 \\
 && $10^{-3}$   &\bf 95.49 &  126.25 &  \bf 95.47 \\
&& $10^{-4}$&  128.94 &  167.35& \bf 127.11 \\
&& $10^{-5}$  & \bf 174.44 &  199.36 &  177.06\\ \hline
\multirow{3}{*}{$1\times 10^3$ / $.5\times 10^3$} & \multirow{3}{*}{66.67}& $10^{-2}$ &  82.94 & 117.97 & \bf 65.94 \\
 &&$10^{-3}$ & 127.17 &  153.29 &  \bf 96.69\\
 && $10^{-4}$& 181.99 &  195.19& \bf 140.37 \\
 &&$10^{-5}$  & 252.95 &  234.97 & \bf 200.43\\ \hline
\multirow{3}{*}{$1.5\times 10^3$ / $4\times 10^3$} & \multirow{3}{*}{62.44} & $10^{-2}$ & 149.89&  195.19& \bf 124.55  \\
&&  $10^{-3}$ & \bf 189.52&  317.64&  204.77 \\
 && $10^{-4}$ &\bf 266.63 &  396.82 &  275.75 \\
 && $10^{-5}$ &  344.26 &  460.29 & \bf 333.55 \\ \hline
\multirow{3}{*}{$1.5\times 10^3$ / $3\times 10^3$}   & \multirow{3}{*}{62.78} &$10^{-2}$ &  145.59 &  215.65 & \bf 141.87 \\  
  && $10^{-3}$ & 203.86 &  300.57 & \bf 201.62 \\ 
 && $10^{-4}$ &\bf 261.12 & 397.72 &  271.34 \\ 
 &&$10^{-5}$ & \bf 344.19 &  477.64 &  346.47 \\ \hline
\multirow{3}{*}{$1.5\times 10^3$ / $2\times 10^3$}& \multirow{3}{*}{63.11}&$10^{-2}$ &  \bf 149.13 &  251.51 &  169.37 \\ 
                                                   &&       $10^{-3}$ & 250.02 & 354.75&  \bf 238.57 \\
&&  $10^{-4}$ &319.36 & 445.38&\bf 317.51\\
&& $10^{-5}$ & 414.91 &  523.38 & \bf 408.77 \\ \hline
  \end{tabular} 
\caption{Table showing average timings in seconds across a grid of ten $\lambda$ values with varying degrees of accuracy i.e. TOL. 
The column under average \% of non-zeroes denotes the \% of non-zeros in the optimal solution, averaged across the ten $\lambda$ values. No warm-start across $\lambda$'s is used. }  \label{tab:three}
\end{table}

Table \ref{tab:three} shows that \pine, \pex and \pgr return lower accuracy solutions to 
(\ref{crit1}) --- in times which are much less than that taken to obtain higher accuracy solutions. Note that even the lower accuracy solutions correspond to sparse and positive definite precision matrices, with guarantees on `TOL'. 
Even more interesting is the flexibility of methods like \pine, \pgr to obtain sparse and positive definite
solutions at low computational cost when compared to the times taken by the dual algorithms in Table \ref{tab:one}.
These favorable comparative timings go on to support 
our claim of making large scale covariance selection very practical. \pex turns out to the slowest among the three, \pgr and \pine are quite strong competitors, with \pine winning in majority of the situations. 

\subsection{Graphical display of sparsity patterns in the Movie-lens Graphs}\label{sec:spy}
This is an elaborate version of Section~\ref{sec:realmlens} in the main text.
Figure~\ref{fig-spy} represents the sparsity structures of the precision graphs obtained from the movie-movie 
similarities. The graph structures are displayed under the Sparse reverse Cuthill-McKee 
ordering\citep{spy-perm}\footnote{For a sparse symmetric matrix $A$ the reverse Cuthill-McKee ordering is 
a permutation $\pi$ such that $A(\pi,\pi)$ tends to have its nonzero elements closer to the diagonal. This is often used for visualizing the sparsity structure of large dimensional graphs} of the precision matrices 
as delivered by our algorithm \pine for three different values of $\lambda$. The presence of a `dot' in the figure represents a non-zero edge weight in the corresponding movie-movie precision graph. 
The percentages of (off-diagonal) non-zeros are presented below each figure. 
The tapering nature of the graphs for larger values of $\lambda$, show that the movies towards the extreme ends of the axes 
tend to be connected to few other movies. These movies tend to be conditionally dependent on very few other movies. 
The higher density of the points towards the center (of the left two figures) show that those movies tend to be connected to a larger number of other movies. 
\begin{figure}[ht]
  \centering
 \begin{psfrags}
 \psfrag{rhoid9}[][b]{\small{percentage zeros: 98.9 }}
\psfrag{ConnCompSizes}[][cb]{\small{$\log_{10}$(Size of Components)}}
\includegraphics[width=.31\textwidth,height=2.7in]{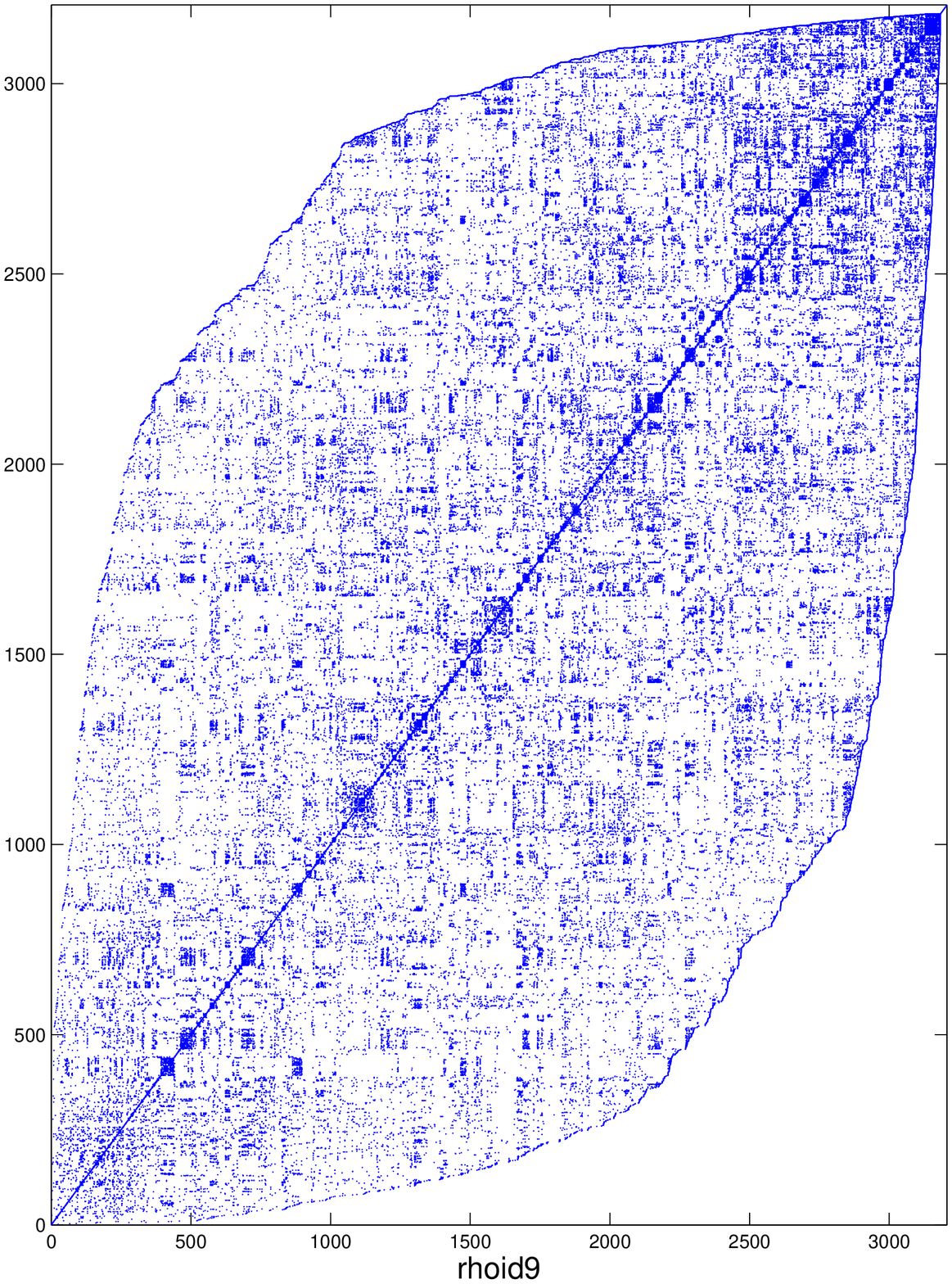}
\end{psfrags}
\begin{psfrags}
 \psfrag{BROWNDATA}[][ct]{\textsc{(B)}  \hspace{.01in} \scriptsize{p=4718} }
 \psfrag{RHO}{}
\psfrag{ConnCompSizes}[][cb]{\small{$\log_{10}$(Size of Components)}}
 \psfrag{rhoid7}[][b]{\small{percentage zeros: 98.7 }}
 \includegraphics[width=.31\textwidth,height=2.7in]{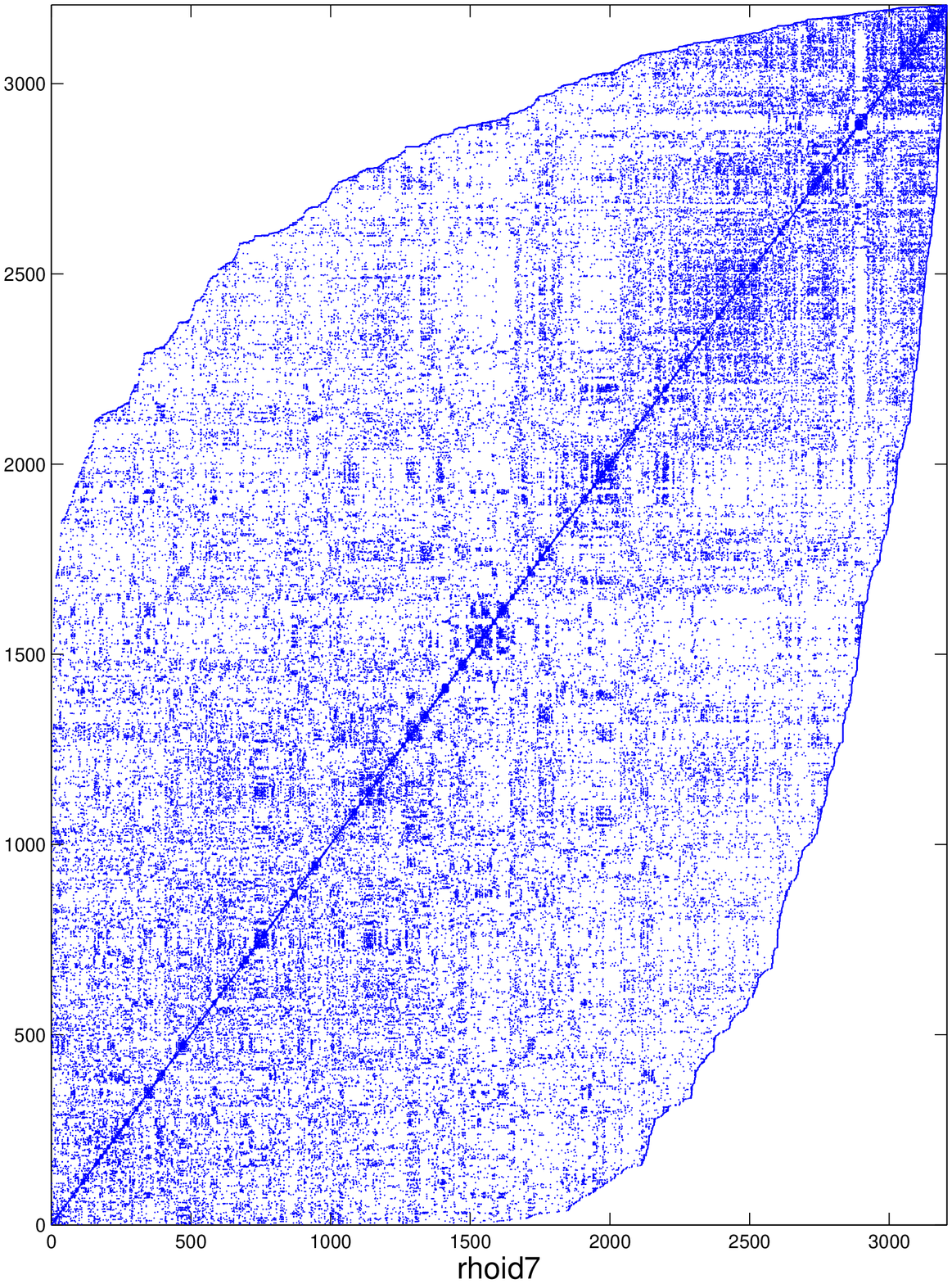}
\end{psfrags}
\begin{psfrags}
 \psfrag{NKIDATA}[][ct]{\textsc{(C)}  \hspace{.01in} \scriptsize{p=24281} }
 \psfrag{rhoid6}[][b]{\small{percentage zeros: 97.8 }}
\psfrag{ConnCompSizes}[][cb]{\small{$\log_{10}$(Size of Components)}}
 \includegraphics[width=.31\textwidth,height=2.7in]{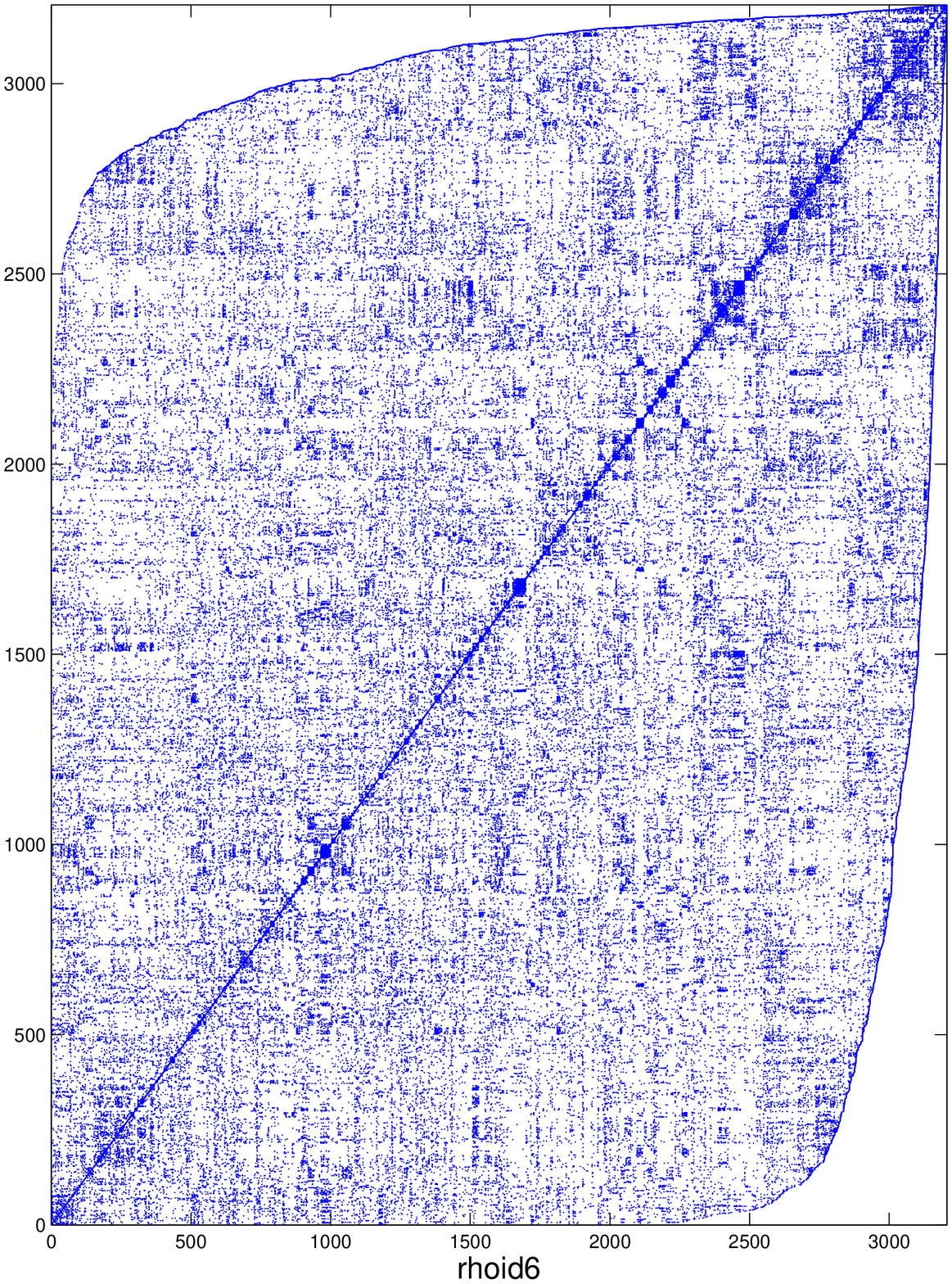}
 \end{psfrags}
  \caption{MATLAB {\tt spy} plots under the reverse Sparse reverse Cuthill-McKee ordering of the vertices
 of the sparse precision matrices obtained via \pine, for three different values of the tuning parameters. A dot represents presence of an edge. The percentage of off-diagonal zeros in the matrix are given below each plot.}
\label{fig-spy}
\end{figure}

\subsection{Movie-ID to Movie mapping Table}\label{sec:movie-map}
The movie ids--- movie name mapping is given in the following table:

\begin{table}[htpb!]
\scalebox{0.8}{
\begin{tabular}{| l | l | }     \hline
(0) {\tt PuppetMaster5: TheFinalChapter (1994) }  & (1) {\tt PuppetMaster4 (1993) }  \\
(2) {\tt Carnosaur3: PrimalSpecies (1996) } & (3) {\tt Carnosaur2(1995) }   \\ 
 (4) { \tt Fridaythe13thPartV: ANewBeginning (1985) } & (5) { \tt Fridaythe13thPartVII:TheNewBlood(1988) } \\ 
 (6) {\tt Porky'sRevenge (1985) } & (7) { \tt Porky's II: TheNextDay (1983) }\\ 
(8) { \tt SororityHouseMassacre (1986) } & (9) { \tt SororityHouseMassacreII (1990) } \\ 
 (10)  {\tt PoliceAcademy5: Assignment:MiamiBeach(1988) } & (11) {\tt PoliceAcademy6:CityUnderSiege(1989) } \\ 
 (12){\tt RockyIV(1985)} & (13) {\tt  RockyIII (1982) } \\ 
(14)  {\tt  Hellbound:HellraiserII(1988) } & (15) {\tt  Hellraiser(1987) } \\
 (16) {\tt  CloseShave,A(1995) } & (17) {\tt  WrongTrousers,The (1993) } \\ 
 (18) {\tt  Godfather:PartII,The(1974) } & (19) {\tt  Godfather,The(1972) }  \\ \hline
\end{tabular}}  
\caption{ Table showing thenames of the top 20 Movies, appearing in the top twenty strongest partial correlations. It is seen from 
Figure~\ref{fig:subgraphs} that edges often occur between movies and their sequels.}\label{tab:movies}
\end{table}

\end{appendix}

\bibliographystyle{biometrika} 
\bibliography{/h0c/rahulm/Dropbox/paper_glasso/new_agst_new.bib}

\end{document}